\newcommand\solidcirc[4][0]{\rotatebox{#1}{\tikz{\draw[line width=#2] (0,0) 
			arc [x radius=#3,y radius=#4,start angle=0,end angle=360];}}}
\theoremstyle{plain}
\newtheorem{theorem}{Theorem}[section]
\theoremstyle{definition}
\theoremstyle{remark}
\icmltitlerunning{$\boldsymbol{\mu}$-Net: ConvNext-Based U-Nets for Cosmic Muon Tomography}
\begin{document}
	
\twocolumn[
\icmltitle{$\boldsymbol{\mu}$-Net: ConvNext-Based U-Nets for Cosmic Muon Tomography}



\icmlsetsymbol{equal}{*}

\begin{icmlauthorlist}
	\icmlauthor{Lim Li Xin Jed}{equal,sch}
	\icmlauthor{Qiu Ziming}{equal,sch}
\end{icmlauthorlist}

\icmlaffiliation{sch}{Graduate of NUS High School of Mathematics and Science, Singapore}

\icmlcorrespondingauthor{Lim Li Xin Jed}{jedlimlx@outlook.com}
\icmlcorrespondingauthor{Qiu Ziming}{qiuzeeming@gmail.com}

\icmlkeywords{Machine Learning, Deep Learning, U-Net, ConvNeXt, Muon Tomography}

\vskip 0.3in

]



\printAffiliationsAndNotice{Our model code is at \url{https://github.com/jedlimlx/Muon-Tomography-AI} and our data generation code is at \url{https://github.com/jedlimlx/Muons-Data-Generation}. Our dataset can be found at \url{https://www.kaggle.com/datasets/tomandjerry2005/muons-scattering-dataset}. \icmlEqualContribution} 

\begin{abstract}
	Muon scattering tomography utilises muons, typically originating from cosmic rays to image the interiors of dense objects. However, due to the low flux of cosmic ray muons at sea-level and the highly complex interactions that muons display when travelling through matter, existing reconstruction algorithms often suffer from low resolution and high noise. In this work, we develop a novel two-stage deep learning algorithm, $\mu$-Net, consisting of an MLP to predict the muon trajectory and a ConvNeXt-based U-Net to convert the scattering points into voxels. $\mu$-Net achieves a state-of-the-art performance of 17.14 PSNR at the dosage of 1024 muons, outperforming traditional reconstruction algorithms such as the point of closest approach algorithm and maximum likelihood and expectation maximisation algorithm. Furthermore, we find that our method is robust to various corruptions such as inaccuracies in the muon momentum or a limited detector resolution. We also generate and publicly release the first large-scale dataset that maps muon detections to voxels. We hope that our research will spark further investigations into the potential of deep learning to revolutionise this field.\\
\end{abstract}

\section{Introduction}

Muon tomography is an imaging technique that utilises muons, typically originating from cosmic rays, to image the interiors of objects. By leveraging the fact that muons are highly penetrating particles, muon tomography offers a non-invasive and non-destructive means of investigating the internal composition of dense materials. Through tracking and analysis of muon trajectories and energies, this enables the accurate reconstruction of internal structures and features. Consequently, muon tomography has emerged as a tool with wide-ranging applications in fields such as geophysics \cite{volcano}, civil engineering \cite{civil_engineering}, and archaeology \cite{mines, giza}. \\


Due to cosmic rays entering the Earth's atmosphere, there is no need for a specialised muon source unlike other types of tomography. However, there are many other challenges in this task. First, the flux of cosmic muons at sea-level is very low \cite{muon_flux}. In order to produce a decent reconstruction, data has to be collected for a long period of time. Furthermore, unlike other types of tomography such as x-ray computed tomography, muons will scatter off atomic nuclei. This makes the forward operator highly nonlinear. In contrast, in computed tomography, x-rays only attenuate and hence, the forward operator is the linear radon transform. In addition, due to limitations of current day muon detectors, the momentum of muons which significantly affects the muon scattering angle is not known. At best, we will have an estimate of the momentum $\hat{p}$ with a significant amount of uncertainty. \\

Several methods have been developed to tackle this problem. The first algorithm developed was the Point of Closest Approach (PoCA) algorithm \cite{poca} which assumes that the muons only scatter once at the point of closest approach of its inward and outward trajectory. The Maximum Likelihood and Expectation Maximisation (MLEM) algorithm \cite{mlem} improves on PoCA and iteratively optimises the reconstruction to maximise the likelihood of producing a given scattering outcome. Other algorithms have also been developed such as maximum a posterori (MAP) \cite{map}, most likely path \cite{mlp1,mlp2,mlp3}, scattering density estimation (SDE) \cite{sde} and angle statistics reconstruction (ASR) \cite{asr}. Other methods have also been developed for low dosages such as the binned clustering algorithm \cite{binned} and a method based on density clustering \cite{density-clustering}. \\

However, no one has attempted to make use of developments in deep learning to directly solve this ill-posed problem and go directly from muon detections to a 3D reconstruction. Due to the abundance of data which can be obtained from simulations from software such as Geant4 \cite{geant4} and the non-linearity of the problem, deep learning methods are well-suited for this task. They have been previously applied to other inverse problems such as computed tomography \cite{ct}. \\

In this work, we develop a novel two-stage deep learning algorithm, $\mu$-Net, for cosmic muon scattering tomography, based on using the point of closest approach (PoCA) algorithm and the U-Net architecture proposed by \citet{unet}. To our knowledge, this is the first application of deep learning in muon scattering tomography to directly perform the 3D reconstruction. Instead of using the more traditional Residual Blocks, we make use of ConvNeXt Blocks \cite{convnext}. 
We find that our model significantly outperforms traditional reconstruction algorithms (in speed and accuracy) and achieves state-of-the-art performance. Our method achieves a PSNR of 17.14 with only 1024 muons while PoCA achieves a PSNR of only 13.66 while taking 22.5s to run while $\mu$-Net runs in 126 ms.

We have also generated the first large-scale dataset mapping muon detections to voxels. In prior literature, there has been no standard benchmark to evaluate various methods for 3D muon tomographic reconstructions and no quantitative metrics used to compare different methods. As such, we release our dataset and data generation code publicly. We hope this will spark more systematic investigations into reconstruction algorithms on this task, using deep learning or using traditional methods.\\

\section{Preliminaries}

\subsection{Physical Background}

Muon scattering relies primarily on modelling the scattering interaction between muons and matter. \citet{mcs} developed a scattering theory for charged particles and found that charged particles travelling through a plate of thickness $x$ that undergo Coulomb scattering have scattering angles and lateral displacements that follow a Gaussian distribution with mean, $\mu=0$ and variance,

\begin{equation}
	\sigma^2=\frac{E_s^2}{2p^2v^2}\frac{x}{L_{rad}}
\end{equation}

where $E_s=21$ MeV, $\lambda$ is the radiation length of the material, $x$ is the thickness of the plate and $p$ and $v$ are the momentum and velocity of the muon respectively.

From this formula, we can define the parameter of interest, the scattering density $\lambda$.

\begin{equation}
	\lambda=\frac{\sigma^2}{x}=\frac{15 MeV}{p^2v^2}\frac{1}{L_{rad}}
\end{equation}

With this, the task of muon tomography is to find the distribution of $\lambda$ within the object through looking at positions and directions of the incoming and outgoing muons.

\subsection{Problem Statement}

Before proceeding with a literature review and the description of the method, let us formalize the muon reconstruction problem. We shall follow a similar notation to \citet{mlem}.

Let the object of interest be defined by its scattering density,

\begin{equation}
	\lambda(x,y,z)=\left(\frac{15}{p_0}\right)^2\frac{1}{L_{rad}(x,y,z)}
\end{equation}

where $L_{rad}(x,y,z)$ is the radiation length at each point within the object.

We can represent the scattering density in terms of some basis functions $\phi_j(x,y,z)$ such that

\begin{equation}
	\lambda(x,y,z)=\sum_j\alpha_j\phi(x,y,z)
\end{equation}

where $\boldsymbol{\alpha}=\left[\alpha_j\right]$ are the coefficients for the basis functions.

Suppose the muon detections, $\mathbf{y}$ follow a distribution $D$ parameterized by the scattering density of the object, i.e.

\begin{equation}
	\mathbf{Y} \sim D(\boldsymbol{\alpha})
\end{equation}

Therefore, given a sample of $n$ muon detections $Y_n=\left[\mathbf{y_1}, \mathbf{y_2}, \dots, \mathbf{y_n}\right]$ we wish to construct a point estimate $\mathbf{a}(Y_n)$ of $\boldsymbol{\alpha}$, which is approximated by the deep neural network $\boldsymbol{f_\theta}(Y_n)$ parameterized by $\boldsymbol{\theta}$.

In this paper, we will take $p_0$ to be 15 MeV, so that we directly regress the reciprocal of the radiation length $\frac{1}{L_{rad}(x, y, z)}$ in units of cm$^{-1}$.

\subsection{Motivation}

Some key features of the muon reconstruction problem are immediately apparent from the problem statement, which help us design our model architecture are listed below:

\begin{itemize}
	\item The model should be permutation-invariant, i.e. the order in which muons are inputted into the model should not change the output.
	\item The model should accept any number of input muons.
	\item The model should be able to make use of paired input and output muon detections i.e. the model needs to know which input muon corresponds to which output.
	\item The model should be able to take advantage of the 3D spatial structure of the target output.
\end{itemize}

This seems to suggest making use of Transformers \cite{mha} with the positional encoding removed. However, the dot product attention module used in Transformers has a large time-complexity of $O(N^2)$ where $N$ is the dosage. This is not acceptable given dosages can go up to 10000 muons or more. The transformer will also be unable to take advantage of the spatial structure of the output. Hence, we will make use of the PoCA algorithm and the U-Net architecture \cite{unet} to create a two-stage algorithm.

\section{Related Work}

\subsection{Deep Learning Methods}

\paragraph{Point Clouds.} From the key features of the muon tomography problem, we see that it is highly similar to point cloud problems, which also take in permutation invariant data (a set of points), but the points themselves also exhibit some 3D structure. Deep learning methods on point cloud data can generally be classified into 2 main categories, neural networks which operates directly on the points, and neural networks which operate on a voxelized representation of the data \cite{bello2020deep}. In our work, we chose the latter approach since the former tends to be slightly worse at capturing spatial structure, and our desired output is also in the form of voxels.

\paragraph{U-Net.} The U-Net was first proposed as an medical image segmentation model \cite{unet}. It consists of a downward branch, where the image is downsampled and an upward branch, where the image is upsampled. Skip connections are also used between the downward and upward branches to allow high resolution features, which may be removed during downsampling, to be retained. U-Nets have also been extended to process 3D data \cite{cciccek20163d, ho2021point}, by replacing the standard 2D convolution operations with 3D convolutions. 

\paragraph{ConvNext.} ConvNext is a recent iteration of the family of convolutional neural networks \cite{convnext}. It was proposed as an improvement of the ResNet \cite{resnet} by incorporating methods found in Vision Transformer architectures, most notably the SWIN transformer \cite{liu2021swin}, and has been shown to obtain competitive performance against Transformer-based methods but with a lower parameter count. In our work, we will use a modified 3D ConvNext block to form our U-Net.

\paragraph{IEEE BigData 2023 Cup.} Concurrent to this work, is the "IEEE BigData 2023 Cup: Object Recognition with Muon Tomography using Cosmic Rays" organised by \citet{competition}. Unlike our work which addresses full three-dimensional reconstruction, they focus on reconstruction of 2-dimensional objects placed in the central plane. The metric used is the mean average IoU. However, this metric equally penalises models for misidentifying materials with a small difference in radiation length and a large difference in radiation length. Hence, in this paper, we adopt the mean squared error and the peak-signal to noise ratio (PSNR) as our primary metrics. We are unfortunately unable to compare our results to the results from this competition as at the time of writing, the competition report and the papers of the winning entries are not accessible online.

\subsection{Traditional Algorithms}

\paragraph{Point of Closest Approach.} The Point of Closest Approach (PoCA) method is a commonly used method for muon scattering tomography, first proposed by \citet{poca}. It assumes that the muon is scattered once by an object at the point of closest approach between the input and output trajectories. However, this fails to take into account the fact that in many cases, muons may scatter multiple times. As a result, there will be predictions that muons scatter at points where there is actually no object as illustrated in Figure \ref{fig:poca}. In addition, some information is lost as some muons may have a point of closest approach outside of the object space.

\begin{figure}[h]
	\centering
	\includegraphics[scale=0.5]{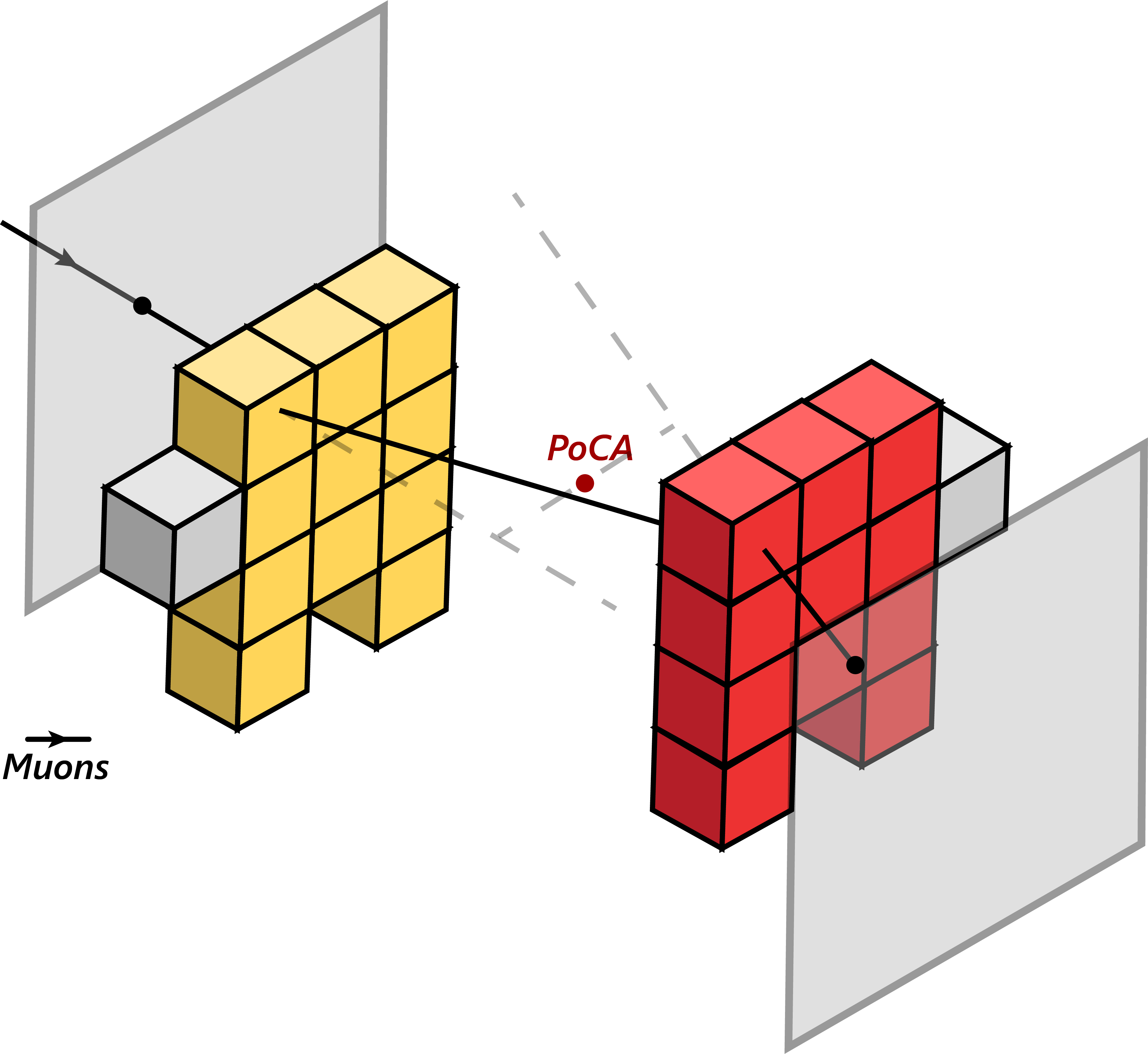}
	\caption{\textbf{A weakness of PoCA.} Since the muon scatters more than once, the computed PoCA is not within any object.}
	\label{fig:poca}
\end{figure}

\paragraph{Maximum Likelihood Expectation Maximization.}

The maximum likelihood expectation maximisation (MLEM) algorithm is a statistical reconstruction method proposed by \citet{mlem}. It makes use of the statistical distribution of muon scattering to frame muon reconstruction as a maximum likelihood problem. An iterative expectation maximisation algorithm is then used to find the scattering densities within the object that is most likely to result in the observed data collected by the muon detectors.

Existing literature \cite{zeng2020principle} have found that MLEM has better qualitative performance than direct allocation to PoCA. However, in our experiments, we observed that MLEM has significantly lower performance (see Figure \ref{fig:2d-crosssection}). We hypothesize that this is due to the lower muon dosages we used in our experiments, which makes it difficult to apply statistical methods to the reconstruction. Due to the lower performance and higher computational cost of MLEM, we will be comparing our model against PoCA for most of our experiments.

\section{Methods}

\begin{figure*}
	\centering
	\includegraphics[scale=0.25]{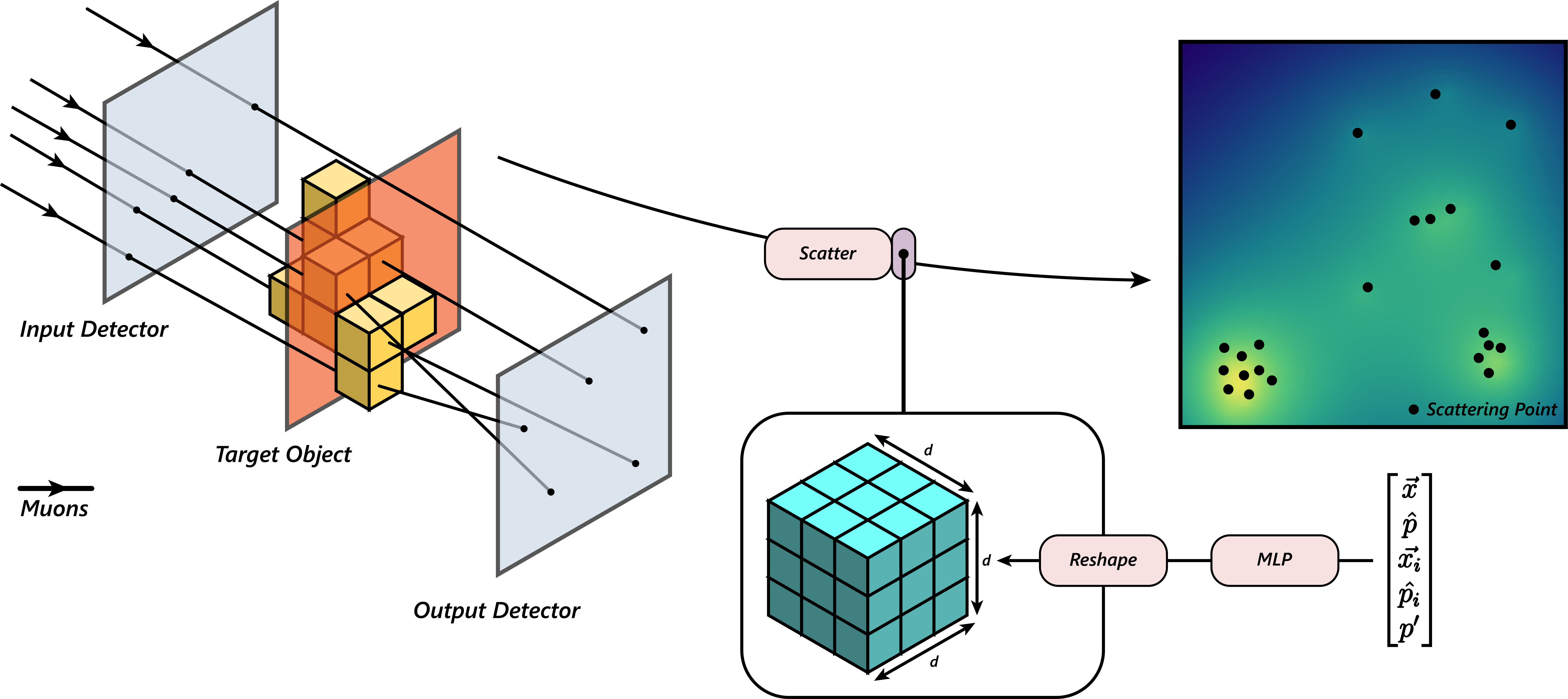}
	\caption{\textbf{First stage of $\boldsymbol{\mu}$-Net.} The muon features (initial position, initial momentum, etc.) are passed through an MLP and placed at PoCA scattering points within a 3D volume. If they overlap, the average is taken.}
	\label{fig:sparse-matrix}
\end{figure*}

\paragraph{Scatter Operation.} Our goal is to convert a set of muon detections into a reconstruction output. One simple way to do this is to first convert the muon detections into some 3D representation, which can then be fed into a U-Net.

To achieve this, we first apply an MLP on the muon's input parameters. The output features are reshaped into a $d\times d\times d\times c$ block. We call $d$ the point size and $c$ is the number of channels. Now, using the PoCA algorithm, we find the muon's point of closest approach and scatter the output features into the voxels nearest to the PoCA point. We choose the PoCA point as the information about the muons that scatter in a given area is the primary information that is needed to help decide what the scattering density of that area is. For muons that do not scatter, we place this block at a random point along their trajectory. This provides improved performance since the vast majority of muons will not scatter. Placing them randomly along their trajectory provides additional information to the model that the scattering density in that region is low. 

After this, if there is overlap in the scattering voxels, the sum is taken and a counter keeps track of how muons scatter in a given voxel. An MLP is used to combine this counter with the other projected features.

This approach can be implemented in a memory-efficient manner using TensorFlow's $\mathtt{tf.scatter\_nd}$ to avoid creating a large number of 3D tensors in parallel. In contrast, other possibilities like converting each muon detection into its own 3D tensor and then summing them will take up alot of memory and is in practise, infeasible.

Nevertheless, this approach does come with some limitations. The most prominent being that the scattering operation is fundamentally non-differentiable. This means that we are unable to parameterise the position of the scattering points using a neural network directly. As such, we resort to using PoCA to get an approximate scattering point for the muons.

\paragraph{Feature Engineering.} Each muon detection is a 14-length vector consisting of the input position, $\mathbf{x_0}$, output position, $\mathbf{x_f}$, input momentum, $\mathbf{\hat{p}_0}$, output momentum, $\mathbf{\hat{p}_f}$, an estimate of the momentum magnitude, $|\mathbf{p}|$ and an estimate of the scattering angle, $|\mathbf{\hat{p}_f}-\mathbf{\hat{p}_i}|$. Although this estimate of the scattering angle can be easily computed from the other features, its inclusion improves performance as the model can easily know which muons should have a larger activation because they scattered more.

\begin{algorithm}[tb]
	\caption{Scattering Operation}
	\label{alg:mu-net-part-1}
	\begin{algorithmic}
		\STATE {\bfseries Input:} muon detections $\{\boldsymbol{\mu}_1,\dots,\boldsymbol{\mu}_n\}$, resolution $R$
		\STATE Initialize $\mathbf{X}\in\mathbb{R}^{r\times r\times r\times (c+1)}$
		\FOR{$i=1$ {\bfseries to} $i=n$}
		\STATE $\mathbf{y}_i\gets\mathsf{MLP}_1(\boldsymbol{\mu}_i),\mathbf{y}_i\in\mathbb{R}^{d\times d\times d\times c}$
		\IF{$\boldsymbol{\mu}_i$ scatters}
		\STATE Place $\mathbf{y}_i$ in X at $\mathsf{PoCA}(\boldsymbol{\mu}_i)$
		\ELSE 
		\STATE Place $\mathbf{y}_i$ at a random point along the muon's trajectory
		\ENDIF
		\STATE Increment the corresponding last channel of X by 1 in the corresponding place
		\ENDFOR
		\STATE $X^\prime\gets\mathsf{MLP}_2(X),X^\prime\in\mathbb{R}^{r\times r\times r\times c}$ 
		\STATE \textbf{return} $X^\prime$
	\end{algorithmic}
\end{algorithm}

\paragraph{U-Net.} Now, we make use of the U-Net \cite{unet} to process the voxelised volume matrix from the first stage. In our U-Net, instead of using the more traditional Residual Blocks \cite{resnet}, we make use of ConvNeXt Blocks \cite{convnext} which result in better performance than Residual Blocks at a lower computational cost. Each layer of the U-Net contains multiple such ConvNeXt blocks. Downsampling is done using Layer Normalisation followed by a convolutional layer of strides = 2. Upsampling is done by first applying a pointwise convolution and layer normalisation before using nearest neighbour upsampling.

\paragraph{Model Sizes.} In our experiments, we tested out 3 models - $\mu$-Net-T, $\mu$-Net-B and $\mu$-Net-L. These variants only differ in the number of blocks $B$ and the number of channels $C$ in each stage. The number of parameters of each of these models is listed as $P$. We do not scale up the model further due to limited computational resources. We will leave further exploration of the scaling of $\mu$-Net to future work.

\begin{itemize}
	\item $\boldsymbol{\mu}$\textbf{-Net-T:} $P=1.1M$\\$B=(1,2,3,4,5),\;C=(8,16,32,64,128)$
	\item $\boldsymbol{\mu}$\textbf{-Net-B:} $P=5.0M$\\$B=(1,2,4,4,6),\;C=(16,32,64,128,256)$
	\item $\boldsymbol{\mu}$\textbf{-Net-L:} $P=14.8M$\\$B=(1,2,4,6,8),\;C=(24,48,96,192,384)$
\end{itemize}

\paragraph{Training Techniques.} We train our model using the AdamW optimizer with a learning rate of $2\times10^{-3}$ and a weight decay of $4\times10^-3$. The model is trained for 15 epochs.

\paragraph{Universal Approximation.} We have shown that $\mu$-Net is an universal function approximator for continuous set functions given the model is large enough and either of the following conditions are met:

\begin{itemize}
	\item The resolution of the reconstructed volume is sufficiently large such that there is no overlap in the reconstructed points \textbf{or}, 
	\item the resolution of the reconstructed volume is finite but the point size is large enough to cover the entire reconstructed volume.
\end{itemize}

Formally, we can define $\chi=\{S: S\subseteq\mathbb{R}^m\;\mathit{and}\;|S| = n\}$ and $f: \chi\rightarrow\mathbb{R}$ is a continuous set function w.r.t the Hausdorff distance $d_H(\cdot,\cdot)$. Our theorem proves that $f$ can be arbitrarily approximated by our model if the resolution is sufficiently high, or if the resolution is fixed but the point size $d$ is the same as the resolution.

\begin{theorem}
	\label{universal_approx}
	Suppose $f: \chi\rightarrow\mathbb{R}^p$ is a continuous set function w.r.t $d_H(\cdot,\cdot)$, such that for all $\epsilon > 0$, there exists some configuration of the model parameters $\theta$ for sufficiently large $p$ \textbf{or} $\phi(\eta(x_i))=J_{p\times d}$ (i.e. the indicator function maps to every point), such that for any $S\in\chi$,
	
	\begin{multline}
		\left|f(S)-\gamma_\theta\left(\left[\sum_{x_i\in S}\{\phi(\eta(x_i))\cdot h_\theta(x_i)\},\right.\right.\right.\\
		\left.\left.\left.\sum_{x_i\in S}\{\phi(\eta(x_i))\cdot J_{d\times c}\}\right]\right)\right| < \epsilon
	\end{multline}
	
	where $\gamma_\theta:\mathbb{R}^{p\times c}\rightarrow\mathbb{R}^p$ is any continuous function, $h_\theta:\mathbb{R}^m\rightarrow\mathbb{R}^{d\times c}$ is any continuous function, $\eta:\mathbb{R}^m\rightarrow\mathbb{R}$, $\phi:\mathbb{R}^m\rightarrow\mathbb{R}^{p\times d}$ and $J_{d\times c}$ is the ones matrix of shape $(d, c)$. $\eta$ represents the PoCA function that generates a scattering point from the muon detection. $\phi$ is an indicator function for a set of intervals of length $d$ derived from its input. The indicator function for each of these intervals is placed along one row in the last dimension. $\gamma_\theta$ and $h_\theta$ can be taken to be any continuous function due to the universal approximation theorem for CNNs and MLPs. $[\textbf{A}, \textbf{B}]$ represents the concatenation of 2 matrices along their last axes.
	
\end{theorem}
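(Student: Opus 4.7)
The plan is to follow the universal approximation strategy for set functions pioneered by PointNet and DeepSets, adapted to our voxelised scattering representation. The key observation is that the intermediate representation formed by concatenating $\sum_{x_i\in S}\phi(\eta(x_i))\cdot h_\theta(x_i)$ with the counter $\sum_{x_i\in S}\phi(\eta(x_i))\cdot J_{d\times c}$ must uniquely (or nearly uniquely) encode $S$; once that is established, the continuous outer function $\gamma_\theta$ can be invoked via the classical universal approximation theorems for MLPs and CNNs to output $f(S)$ to within $\epsilon$.

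First I would use the Hausdorff continuity of $f$ on a compact subset of $\chi$ (obtained by bounding the coordinates of points, which is natural for a physical detector volume) to extract uniform continuity: given $\epsilon>0$ there is $\delta>0$ such that $d_H(S,S')<\delta$ implies $|f(S)-f(S')|<\epsilon/2$. I would then discretise $\mathbb{R}^m$ into a grid of side length at most $\delta$, so that replacing each $x_i\in S$ by the centre of its cell yields a set $\tilde S$ with $d_H(S,\tilde S)<\delta$. It then suffices to approximate $f(\tilde S)$ to within $\epsilon/2$.

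I would then split into the two cases of the theorem. In the ``large $p$'' case, the resolution is fine enough that the supports of $\phi(\eta(\tilde x_i))$ for distinct grid cells are pairwise disjoint. Choosing $h_\theta$ via Hornik's theorem to be injective on the finite set of cell centres, the summation deposits each point's feature into its own dedicated block of voxels, so the resulting $p\times c$ matrix together with the counter injectively encodes $\tilde S$; composing with a continuous decoder $\gamma_\theta$ (realisable by a CNN via its universal approximation theorem) recovers $f(\tilde S)$. In the ``$\phi(\eta(x_i))=J_{p\times d}$'' case, both sums collapse to $p$-fold repetitions of a single $1\times c$ row, whose value is $\mathbf{1}_d^{\top}\sum_i h_\theta(x_i)$ (for the features) and $n$ (for the counter). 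This is precisely the DeepSets sum-pooling form, and a suitable choice of $h_\theta$ makes the sum together with $n$ injective on finite sets of bounded cardinality; again $\gamma_\theta$ provides the approximation.

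The main obstacle I expect is the discontinuity of the indicator-style map $\phi\circ\eta$: points lying arbitrarily close to a voxel boundary can be pushed into different cells by infinitesimal perturbations, threatening the Hausdorff-continuity chain. I plan to handle this by choosing a generic grid offset so that no point of $S$ lies within some $\delta'\ll\delta$ of a cell boundary, and absorbing any residual mismatch into the $\epsilon/2$ slack afforded by the uniform continuity of $f$. A secondary but routine step is to verify that $h_\theta$ and $\gamma_\theta$ of the required form are realisable by our MLP and ConvNeXt-based U-Net respectively, which follows directly from the classical universal approximation results we already cite.
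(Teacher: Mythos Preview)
Your proposal is correct and follows the same two-case split as the paper, with the large-$p$ case handled by recovering $S$ from the voxel grid and the $\phi(\eta(x_i))=J_{p\times d}$ case reduced to the DeepSets sum-decomposition (the paper cites Theorem~7 of Zaheer et al.\ for exactly this step). The execution in Case~1 differs: where you discretise $\mathbb{R}^m$, invoke uniform continuity for an $\epsilon/2$ budget, and then argue injectivity of the encoding on cell centres, the paper simply takes $h_\theta$ to be the identity (forcing $c=m$) and defines $\gamma_\theta=f\circ\mathcal{T}$ with $\mathcal{T}$ the map that reads off the nonzero rows of the $p\times m$ matrix as a set, obtaining error exactly $0$ rather than $\epsilon$. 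Your route is more careful---you confront the discontinuity of $\phi\circ\eta$ at cell boundaries, which the paper ignores entirely---but the extra discretisation and grid-offset machinery is not needed for the paper's level of rigour. In short: same strategy, your version is more scrupulous, the paper's is terser and gets exact rather than approximate recovery in the first case.
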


A brief proof is provided in the Appendix. We also note this theorem generalises to all dimensions easily by replacing $p$ with $r\times r$, $r\times r\times r$, etc.

The limitations of the theorem to arbitrarily large resolutions or point sizes comes from our lack of assumptions about the PoCA and scatter operations (i.e. $\phi(\eta(x_i))$. The nature of these functions depends on the interactions between the muons and the object and is very difficult to analyse due to the complex interactions that muons exhibit with matter.

However, despite these limitations, we hypothesise that our model performs well with a fixed resolution and point size since most of the information about the muon scattering is contained near the the scattering point.

\renewcommand\qedsymbol{$\blacksquare$}

\section{Experiments}

\subsection{Experimental Setup}

Our data is generated using CERN's Geant4 simulation software \cite{geant4}. To generate 3D objects to be analysed by our system, we make use of fractal noise to generate objects of various shapes. The materials of the object is randomly chosen from a set list of materials of different radiation lengths, taken from the IEEE BigData Competition on Muon Tomography \cite{competition}.

The geometry of the system can be found in Figure \ref{fig:geometry}. The target object is contained within a cube of side length 1 m. The input and output detectors are squares of side length 2 m. They are separated from the object by a distance of 0.5 m.

\begin{figure}[h]
	\centering
	\includegraphics[scale=0.3]{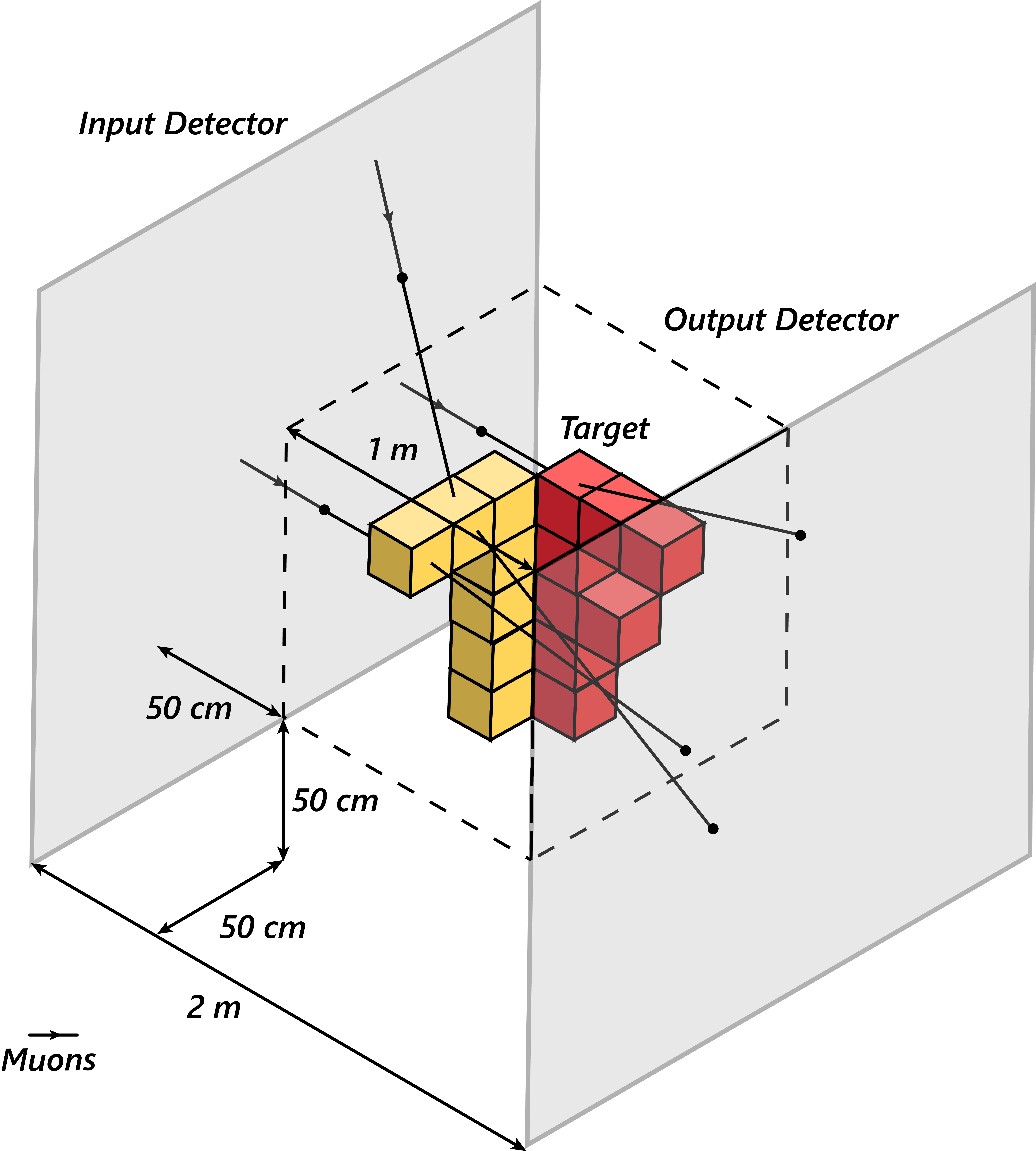}
	\caption{\textbf{Simulation setup.} A scale diagram (except the voxels) illustrating the simulation setup within Geant4.}
	\label{fig:geometry}
\end{figure}

For the muon beam, we use a beam with a $\cos^2$ angular distribution and a power law distribution, in accordance with characterised values of the cosmic muon flux \cite{muon_flux_2}. Muons that are calculated not to hit the detector are killed at the start of the simulation to ensure the simulation runs at a reasonable speed for data generation. 

We use 20000 samples for the training set, 1600 samples for the validation set and 1600 samples for the test set. Our model is implemented using TensorFlow and trained using 2 T4 GPUs.

\subsection{Ablations}

\paragraph{Point Size.} First, we vary the point size at various dosages to see its impact on the model's performance. Interestingly, we find that a smaller point size of 1 results in the best performance.

\begin{table}
	\caption{\textbf{Smaller is better.} The results of the model at different dosages for various point sizes. The inference times are evaluated on 2 T4 GPUs with a batch size of 8. The best results are bolded. The smaller point size of 1 outperforms the larger point size of 3 on almost all dosages.}
	\label{tab:point-size}
	\centering
	\vskip 0.15in
	\begin{tabular}{p{0.7cm}lllll}
		\toprule
		\textbf{Point Size} & \textbf{Dosage} & \textbf{Time}$\downarrow$ &  \textbf{MSE}$\downarrow$ & \textbf{MAE}$\downarrow$ & \textbf{PSNR}$\uparrow$ \\
		\midrule
		1 & 1024 & \textbf{126 ms} & \textbf{0.2276} & \textbf{0.2204} & \textbf{17.1426} \\
		3 & 1024 & 134 ms & 0.2289 & 0.2366 & 17.0971 \\
		\midrule
		1 & 2048 & \textbf{135 ms} & 0.1965 & 0.1989 & 17.7786 \\
		3 & 2048 & 143 ms & \textbf{0.1947} & \textbf{0.1949} & \textbf{17.8280} \\
		\midrule
		1 & 4096 & \textbf{141 ms} & \textbf{0.1653} & \textbf{0.1725} & \textbf{18.5347} \\
		3 & 4096 & 178 ms & 0.1685 & 0.1741 & 18.4482 \\
		\midrule
		1 & 8192 & \textbf{169 ms} & \textbf{0.1388} & \textbf{0.1438} & \textbf{19.2979} \\
		3 & 8192 & 219 ms & 0.1403 & 0.1465 & 19.2434 \\
		\midrule
		1 & 16384 & \textbf{246 ms} & \textbf{0.1169} & \textbf{0.1207} & \textbf{20.0433} \\
		3 & 16384 & 318 ms & 0.1205 & 0.1340 & 19.9047 \\
		\midrule
		1 & 32768 & \textbf{347 ms} & \textbf{0.0993} & \textbf{0.1156} & \textbf{20.7906} \\
		3 & 32768 & 574 ms & 0.1048 & 0.1224 & 20.4946 \\
		\bottomrule
	\end{tabular}
\end{table}

\paragraph{Estimate of Scattering Angle.} We then test out the impact of not providing an estimate of the scattering angle. For a dosage of 1024 muons using $\mu$-Net-T, this results in a PSNR of 16.9142, in contrast with a PSNR of 17.1426 when it is provided. This demonstrates that including the scattering angle, although it can be computed from the other features in the muon detection, helps to improve the model's accuracy.

\paragraph{Random Placement of Muons.} We also test out the impact of placing the non-scattered muons at the centre of the their trajectory, rather than at a random point along their trajectory. For a dosage of 1024 muons using $\mu$-Net-T, this results in a PSNR of 16.8650, in contrast with a PSNR of 17.1426 when the muons are placed randomly along their trajectory. This similarly demonstrates the effectiveness of this method in spreading out the information about unscattered muons across the 3D voxels, enabling a more accurate reconstruction.

\subsection{Model Scaling}

\begin{figure}[h]
	\centering
	\includegraphics[scale=0.5]{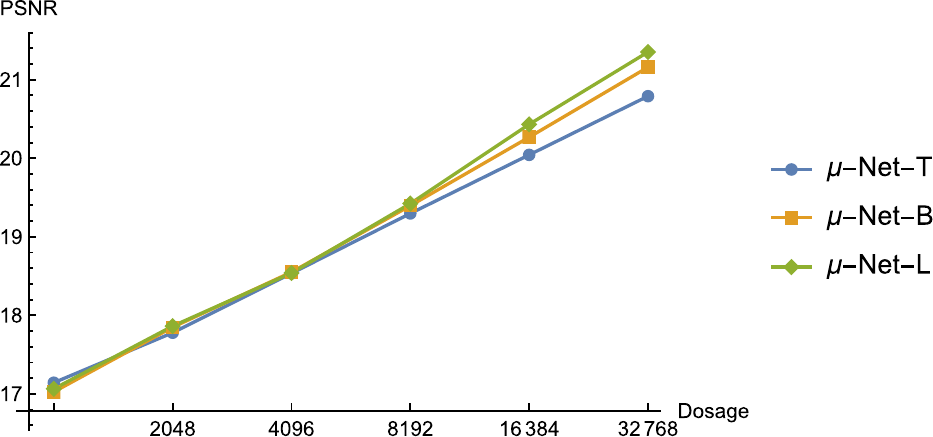}
	\caption{\textbf{Model scaling has little impact.} The PSNR of the different model sizes plotted against dosage levels. We see that for low dosages, the model size has little impact. However, at larger dosages, the impact of model size is greater.}
	\label{fig:model-scaling}
\end{figure}

Now, we explore the impact of the scaling of the model on the performance. The results are plotted in Figure \ref{fig:model-scaling}. We notice that the improvements in performance are actually quite small, especially for small dosages. We hypothesise that this is because we are near the "limit" of how good the reconstruction can be given the U-Net's input of the PoCA points. It is likely that attaining a better estimate of the scattering points would lead to better performance.

Furthermore, we notice that at larger dosages, the improvements in performance increase, likely due to the problem being more complex, hence, providing more room for improvement with a larger model.

\subsection{Comparison with Traditional Algorithms}

\begin{figure}[h]
	\centering
	\includegraphics[scale=0.5]{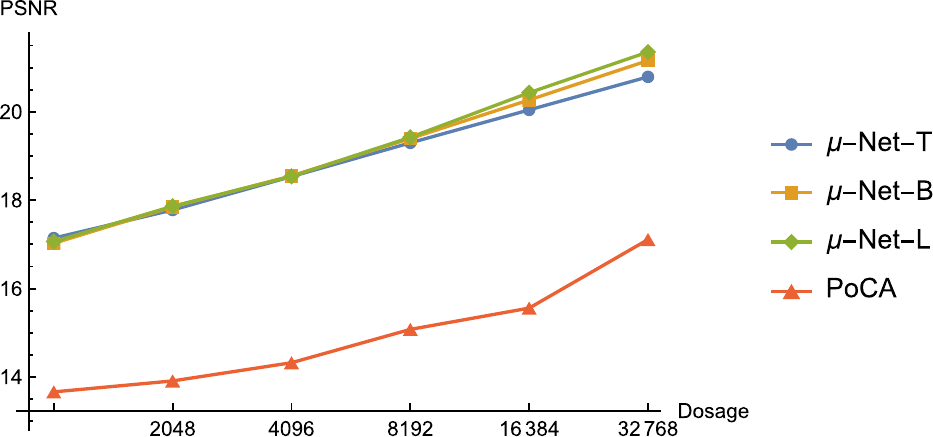}
	\caption{\textbf{$\boldsymbol{\mu}$-Net outperforms PoCA.} The PSNR of $\mu$-Net and PoCA plotted against dosage levels. $\mu$-Net consistently outperforms PoCA over all dosages and all model sizes.}
	\label{fig:poca-comparison}
\end{figure}

\paragraph{Dosage.} We vary the dosage of muons from 1024 to 32768. The results are shown in Figure \ref{fig:model-scaling}. We see that as the dosage increases, the performance of the models increase as well. It is also clear that our model, $\mu$-Net, significantly outperforms the the traditional PoCA algorithm in Figure \ref{fig:poca-comparison}. However, we notice that at the gradient of the graph for PoCA appearing to be increasing. Future study is needed to ascertain if this is just a statistical anomaly or if the PoCA indeed curves upwards.

\begin{figure}[h]
	\includegraphics[scale=0.5]{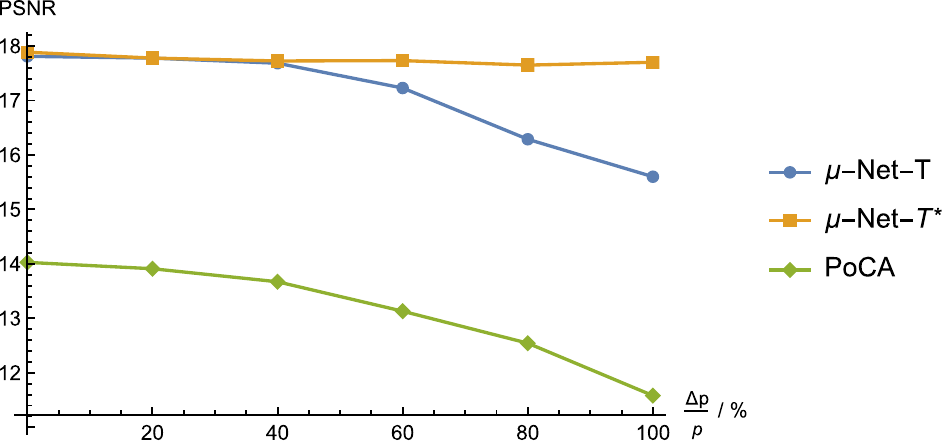}
	\caption{\textbf{$\boldsymbol{\mu}$-Net is robust to momentum error.} The PSNR of various methods plotted against percentage error in the momentum estimate. $\mu$-Net$^*$ represents the model's performance when it is finetuned on the new data. Our model is consistently shown to be robust to perturbations and to significantly outperform the PoCA algorithm.}
	\label{fig:momentum-error}
\end{figure}

\paragraph{Momentum Estimate.} We also look at how varying levels of error in the momentum will affect predictions in Figure \ref{fig:momentum-error}. We again find that our model significantly outperforms the traditional PoCA algorithm. After fine-tuning, we also see that the model's performance stays relatively constant as the error in the momentum increases, indicating our model is robust.

\begin{figure}[h]
	\includegraphics[scale=0.5]{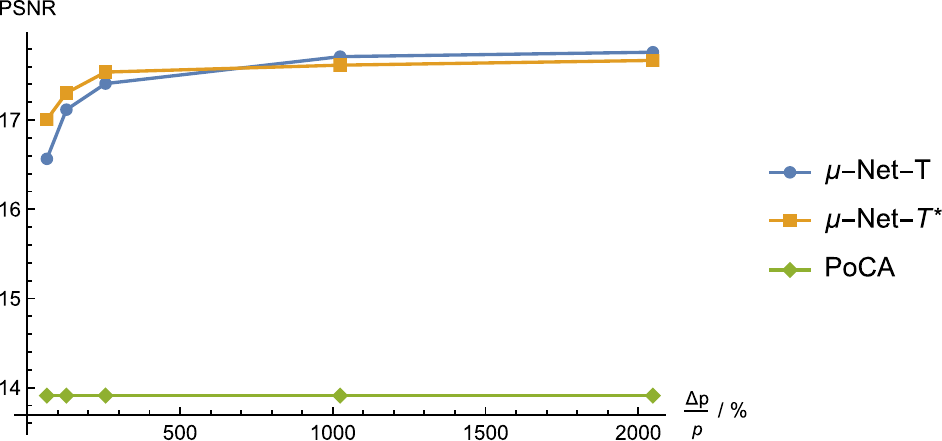}
	\caption{\textbf{$\boldsymbol{\mu}$-Net is robust to detector resolution.} The PSNR of various methods plotted against detector resolution. $\mu$-Net$^*$ represents the model's performance when it is finetuned on the new data. Our model is consistently shown to be robust to perturbations and to significantly outperform the PoCA algorithm.}
	\label{fig:detector-resolution}
\end{figure}

\paragraph{Detector Resolution.} Finally, we look at how the model's performance changes with the detector resolution in Figure \ref{fig:detector-resolution}. Again, we find that our model significantly outperforms the PoCA algorithm. In addition, we find that our model performs well at a variety of resolutions, showing that it is very robust. We also notice that the performance of PoCA stays constant across all resolutions and this is because the reconstruction volume has a resolution of $64\times64\times64$.

\begin{figure*}
	\centering
	\includegraphics[scale=1.5]{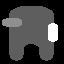}
	\hspace{0.2cm}
	\includegraphics[scale=1.5]{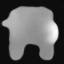}
	\hspace{0.2cm}
	\includegraphics[scale=1.5]{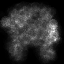}
	\hspace{0.2cm}
	\includegraphics[scale=2.1]{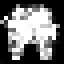}
	\caption{\textbf{2D Reconstruction Results.} This is a cross-section of an object made of 3 different materials. The ability of the model to reconstruct the approximate shapes and differentiate materials is shown clearly. However, there is still a significant amount of blurring. (left) ground truth, (middle-left) $\mu$-Net, (middle-right) PoCA (right) MLEM. MLEM was performed at a much lower resolution because it requires muon tracks to intersect with every voxel.}
	\label{fig:amongus}
\end{figure*}

\begin{figure*}
	\centering
	\includegraphics[scale=2]{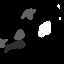}
	\hspace{0.2cm}
	\includegraphics[scale=2]{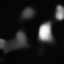}
	\hspace{0.2cm}
	\includegraphics[scale=2]{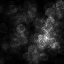}
	\includegraphics[trim={6.1cm 2cm 7cm 4cm},scale=0.35,clip]{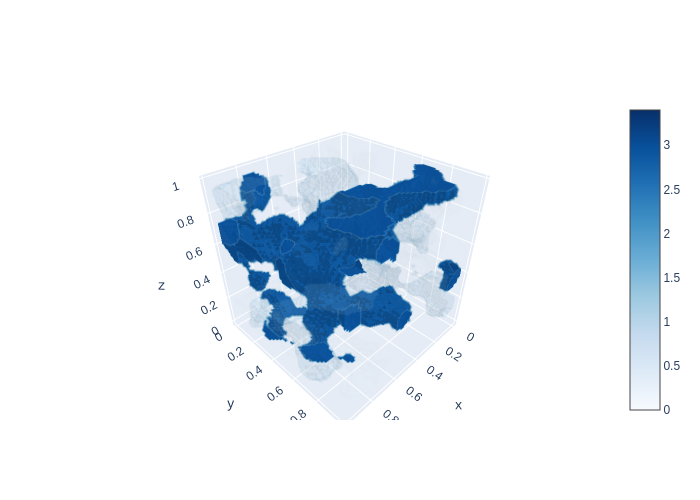}
	\includegraphics[trim={6.1cm 2cm 7cm 4cm},scale=0.35,clip]{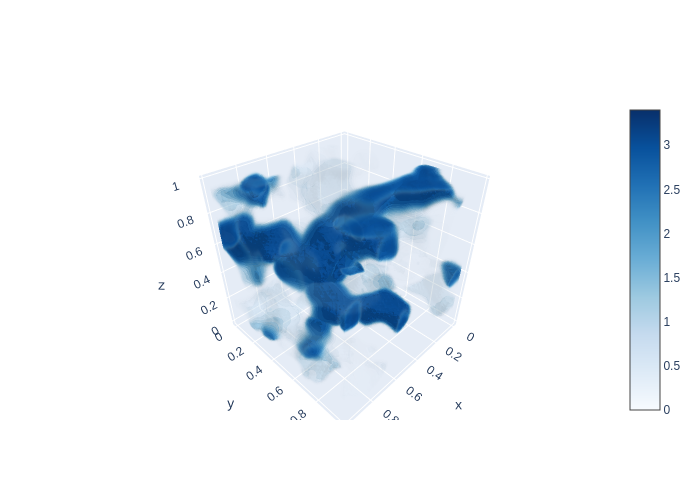}
	\includegraphics[trim={6.1cm 2cm 7cm 4cm},scale=0.35,clip]{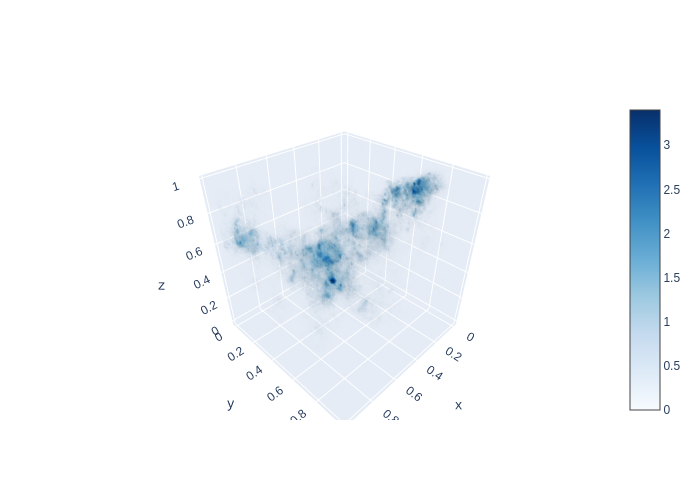}
	\caption{\textbf{3D Reconstruction Results.} This is a 3D plot and a cross-section of one of the imaging targets from the testing set. The ability of the model to reconstruct the approximate shapes and differentiate materials is shown clearly. However, there is still a significant amount of blurring. (left) ground truth, (middle) $\mu$-Net, (right) PoCA}
	\label{fig:3d-reconstruction-main}
\end{figure*}

\paragraph{Visual Comparison.} Now, we visually compare the reconstructions of PoCA and MLEM. These reconstructions are shown in Figure \ref{fig:amongus} and \ref{fig:3d-reconstruction-main}. More reconstructions from different dosages can be found in Appendix \ref{app:images}. A dosage of 32768 muons is used. The MLEM reconstruction is significantly worse since it had to be done using a lower resolution, because the algorithm requires the muon tracks to pass through every voxel, which is not possible at higher resolutions.

We see that $\mu$-Net provides superior reconstruction quality compared to PoCA. Furthermore, $\mu$-Net is also able to distinguish different materials by their radiation length and accurately reconstruct the approximate shape of objects. Nevertheless, we still observe significant levels of blurring. Further analysis of the artifacts found in the reconstruction can be found in Appendix \ref{app:artifacts}.

\section{Discussion}

\subsection{Applications}

\paragraph{Nuclear Non-proliferation.} Muon tomography has been applied in detecting the presence of high-Z materials such as radioactive materials since these materials result in a large amount of scattering. It can also be applied for general screening of cargo for high-Z materials \cite{border-security}. Our model can be used to help improve the quality of these reconstructions, potentially reducing the dosage required to screen the cargo, enabling lower screening times.

\paragraph{Archaeology.} In addition, muon tomography also has numerous applications in archaeology. In particular, the ability of muons to easily penetrate thick layers of material such as rock, enables their use to image the interiors of large structures. For instance, researchers have used muon tomography to discover secret chambers within Khufu's Pyramid \cite{giza, giza2}, image underground cavities in Mount Echia \cite{mountain, mountain2} and discover a secret ancient Greek burial chamber in the centre of Naples \cite{naples}. In cases where the input detections are available, our model can be used to significantly improve the accuracy of the reconstructions.

\subsection{Future Work}

\paragraph{No Input Muon Information.} One limitation of the model is that it depends on the presence of information about the input muons because of its use of the PoCA algorithm. However, in some cases, such as in some archaeological applications, the direction of the input muons is unavailable.

\paragraph{Trajectory Prediction.} As shown in Figure \ref{fig:poca}, when muons scatter more than once, their PoCA will be outside of the boundaries of any object. This is a significant limitation of PoCA and sometimes results in false hotspots of scattering density in the final prediction. One solution to this could be to attempt to predict the muon's full trajectory using a method like \cite{mpt} given an initial guess of the scattering density, which can be obtained from our current model. Using this information, more accurate coordinates of scattering points can be obtained, allowing a more accurate scattering density to be obtained. This process can then be iterated until convergence.

\paragraph{Out-of-Distribution Generalisation.} It is not clear to what extent the model's performance is independent of factors such as the angular and spacial distribution of cosmic ray muons. In addition, in real life, the spacial distribution of the scattering densities will not be that of fractal noise, like what the model was trained on. Before such a model can be deployed in the real-world, it would be important to check its ability to generalise.

\section{Conclusion}

In conclusion, we have constructed a state-of-the-art model for muon scattering tomography which outperforms traditional methods such as PoCA and MLEM. Furthermore, we find that our model is robust to various corruptions, with its performance barely changing when they are applied. We hope that our research will spark further investigation into the usage of deep learning in this field. Improvements in imaging techniques for muon scattering tomography will have wide-ranging applications from ensuring nuclear non-proliferation to the discovery of secret chambers in ancient structures.

\section*{Acknowledgments}

The authors would like to thank their friend Kannan Vishal for helping with the writing of an initial simulation in Geant4, their friends Prannaya Gupta, Kabir Jain, Mahir Hitesh for providing computing power for this project and their teachers Mr Silas Yeem Kai Ean and Mr Ng Chee Loong for providing useful advice.

\bibliography{paper}
\bibliographystyle{icml2024}

\newpage
\appendix
\onecolumn
\section{Proofs}

To prove \autoref{universal_approx}, we shall split the theorem into the arbitrarily large resolution case and the arbitrary large point size case.

\subsection{Arbitrarily Large Resolution}

\begin{theorem}
	Suppose $f: \chi\rightarrow\mathbb{R}^p$ is a continuous set function w.r.t $d_H(\cdot,\cdot)$, such that for all $\epsilon > 0$, there exists some configuration of the model parameters $\theta$ for sufficiently large $p$ \textbf{or} $\phi(\eta(x_i))=J_{p\times d}$ (i.e. the indicator function maps to every point), such that for any $S\in\chi$,
	
	\begin{equation*}
		\left|f(S)-\gamma_\theta\left(\left[\sum_{x_i\in S}\{\phi(\eta(x_i))\cdot h_\theta(x_i)\},\sum_{x_i\in S}\{\phi(\eta(x_i))\cdot J_{d\times c}\}\right]\right)\right| < \epsilon
	\end{equation*}
	
	where $\gamma_\theta:\mathbb{R}^{p\times c}\rightarrow\mathbb{R}^p$ is any continuous function, $h_\theta:\mathbb{R}^m\rightarrow\mathbb{R}^{d\times c}$ is any continuous function, $\eta:\mathbb{R}^m\rightarrow\mathbb{R}$, $\phi:\mathbb{R}^m\rightarrow\mathbb{R}^{p\times d}$ and $J_{d\times c}$ is the ones matrix of shape $(d, c)$. $\eta$ represents the PoCA function that generates a scattering point from the muon detection. $\phi$ is an indicator function for a set of intervals of length $d$ derived from its input. The indicator function for each of these intervals is placed along one row in the last dimension. $\gamma_\theta$ and $h_\theta$ can be taken to be any continuous function due to the universal approximation theorem for CNNs and MLPs. $[\textbf{A}, \textbf{B}]$ represents the concatenation of 2 matrices along their last axes.
\end{theorem}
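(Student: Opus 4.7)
The plan is to split the argument along the disjunction in the hypothesis and treat the two cases separately, exactly as the appendix proposes. In both cases, the objective reduces to showing that the aggregation map
$S \mapsto \bigl(\sum_{x_i \in S} \phi(\eta(x_i)) \cdot h_\theta(x_i),\ \sum_{x_i \in S} \phi(\eta(x_i)) \cdot J_{d\times c}\bigr)$
forms a continuous, sufficiently injective encoding of $S$ from which any continuous set function $f$ can be recovered by a continuous $\gamma_\theta$. The universal approximation theorems for MLPs (for $h_\theta$) and for CNNs (for $\gamma_\theta$) then supply the realisability of the required continuous functions.

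First I would dispose of the easier case where $\phi(\eta(x_i)) = J_{p\times d}$ for every input. Here the aggregation collapses to $J_{p\times d} \cdot \sum_{i} h_\theta(x_i)$, which, up to a fixed linear map, is exactly the DeepSets / PointNet aggregation $\sum_i h_\theta(x_i)$. I would then invoke the Zaheer et al.\ representation theorem together with the Hausdorff-continuous variant proved by Qi et al.\ in PointNet to conclude that suitable continuous $h_\theta, \gamma_\theta$ approximate any continuous set function $f$ to within $\epsilon$. The count channel merely gives $|S|$ times a constant block, which $\gamma_\theta$ can ignore or use for normalisation.

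For the harder case of sufficiently large resolution $p$, the key step is to choose $p$ so that the row-supports of $\phi(\eta(x_i))$ for the finitely many points of $S$ are pairwise disjoint. Since $\eta$ maps into $\mathbb{R}$ and $\phi$ places an interval indicator of length $d$ around $\eta(x_i)$, this disjointness can be arranged whenever the scattering points $\eta(x_i)$ are distinct by taking $p$ large enough that distinct values fall in distinct indicator windows. Under this disjointness, each summand $\phi(\eta(x_i)) \cdot h_\theta(x_i)$ occupies its own block of rows, so the pair (aggregated features, aggregated count) losslessly encodes the unordered multiset $\{(\eta(x_i), h_\theta(x_i))\}$. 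A continuous $\gamma_\theta$, built via the CNN universal approximation theorem, can then read off these blocks and approximate $f$; Hausdorff-continuity of $f$ ensures that small perturbations of $S$ produce only small changes in the block structure and hence in the output, making the approximation uniform on compact subsets of $\chi$.

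The main obstacle will be the disjointness / injectivity step in the high-resolution case, because $\eta$ and $\phi$ are not trainable but fixed by the physics (PoCA and the interval indicator), so I cannot simply engineer them to separate points. I would handle this by first restricting to sets whose PoCA images are $\delta$-separated, proving the approximation on this subclass at any resolution $p \gg 1/\delta$, and then extending to all of $\chi$ by a density and continuity argument: sets with colliding PoCA images form a meagre, $d_H$-approximable locus, and continuity of $f$ bounds the error incurred by perturbing such degenerate configurations. A secondary nuisance is that even when two points collide in the same voxel, the count channel still preserves their multiplicity, which is enough to let the restricted argument extend to degenerate limits without losing the $\epsilon$ guarantee.
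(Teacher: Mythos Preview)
Your proposal is correct and follows the paper's overall strategy: split on the disjunction, reduce the $\phi(\eta(x_i))=J_{p\times d}$ case to the DeepSets/Zaheer et al.\ representation theorem, and in the large-$p$ case argue that disjoint indicator supports make the scatter operation injective so that $\gamma_\theta$ can recover $S$ and then apply $f$.

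The one notable difference is in how the large-resolution case is executed. The paper's argument is blunter than yours: it simply takes $h_\theta$ to be the identity (forcing $c=m$), defines an explicit continuous inverse $\mathcal{T}$ that reads off the nonzero rows of the scattered array, and sets $\gamma_\theta = f\circ\mathcal{T}$, obtaining error exactly $0$. It does not discuss PoCA collisions, $\delta$-separation, density arguments, or the role of the count channel at all---it just asserts that for sufficiently large $p$ the indicator windows do not overlap and moves on. Your version is more scrupulous about the degenerate locus where $\eta$ fails to separate points, but that extra machinery is absent from (and not required by) the paper's proof. If you want to match the paper, drop the $\delta$-separation and meagre-set argument and just exhibit the identity-$h_\theta$ / read-off-$\gamma_\theta$ construction directly.
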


\renewcommand\qedsymbol{$\blacksquare$}
\begin{proof}
	The idea is that for a sufficiently large $p$, the indicator functions of $\phi$ will not overlap, allowing $S$ to be recovered exactly using an inverse function $T$.\\
	
	Let $h_\theta$ simply be the identity function. \footnote{\label{fn}Since $h_\theta$ is the identity function, $c=m$} Now, consider a function $\mathcal{T}: \mathbb{R}^{p\times m}\rightarrow\chi$, $\mathcal{T}(X)=\{x:|x|>0,x\in\mathbb{R}^m,\textit{x is an entry in the last dimension of X}\}$. Now, we define $\gamma_\theta$ as $f\circ\mathcal{T}$. Clearly,
	
	\begin{equation*}
		\left|f(S)-f\left(\mathcal{T}\left(\left[\sum_{x_i\in S}\{\phi(\eta(x_i))\cdot h_\theta(x_i)\},\sum_{x_i\in S}\{\phi(\eta(x_i))\cdot J_{d\times c}\}\right]\right)\right)\right|=\left|f(S)-f(S)\right|=0 < \epsilon
	\end{equation*}
\end{proof}

\subsection{Arbitrarily Large Point Size}

\begin{theorem}
	Suppose $f: \chi\rightarrow\mathbb{R}^p$ is a continuous set function w.r.t $d_H(\cdot,\cdot)$, such that for all $\epsilon > 0$, there exists some configuration of the model parameters $\theta$ for sufficiently large $p$ \textbf{or} $\phi(\eta(x_i))=J_{p\times d}$ (i.e. the indicator function maps to every point), such that for any $S\in\chi$,
	
	\begin{equation*}
		\left|f(S)-\gamma_\theta\left(\left[\sum_{x_i\in S}\{\phi(\eta(x_i))\cdot h_\theta(x_i)\},\sum_{x_i\in S}\{\phi(\eta(x_i))\cdot J_{d\times c}\}\right]\right)\right| < \epsilon
	\end{equation*}
	
	where $\gamma_\theta:\mathbb{R}^{p\times c}\rightarrow\mathbb{R}^p$ is any continuous function, $h_\theta:\mathbb{R}^m\rightarrow\mathbb{R}^{d\times c}$ is any continuous function, $\eta:\mathbb{R}^m\rightarrow\mathbb{R}$, $\phi:\mathbb{R}^m\rightarrow\mathbb{R}^{p\times d}$ and $J_{d\times c}$ is the ones matrix of shape $(d, c)$. $\eta$ represents the PoCA function that generates a scattering point from the muon detection. $\phi$ is an indicator function for a set of intervals of length $d$ derived from its input. The indicator function for each of these intervals is placed along one row in the last dimension. $\gamma_\theta$ and $h_\theta$ can be taken to be any continuous function due to the universal approximation theorem for CNNs and MLPs. $[\textbf{A}, \textbf{B}]$ represents the concatenation of 2 matrices along their last axis.
\end{theorem}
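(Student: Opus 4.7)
The plan is to show that when $\phi(\eta(x_i))=J_{p\times d}$, the two interior sums in the hypothesis collapse to a row-constant $p\times 2c$ matrix whose single distinct row encodes only the permutation-invariant aggregate $\sum_{x_i\in S}s_\theta(x_i)$. Once that collapse is made explicit, the inequality reduces to the classical DeepSets universal approximation statement of Zaheer et al.\ for continuous set functions on fixed-cardinality sets, and so can be closed by choosing $h_\theta$ and $\gamma_\theta$ appropriately.

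First I unfold the matrix products. With $\phi(\eta(x_i))=J_{p\times d}$, the $(k,\ell)$ entry of $J_{p\times d}\cdot h_\theta(x_i)$ equals $\sum_{j=1}^{d}(h_\theta(x_i))_{j\ell}$, which is independent of the row index $k$. Writing $s_\theta(x_i)\in\mathbb{R}^{c}$ for the column sums of $h_\theta(x_i)$, the first inner sum becomes $\mathbf{1}_p\bigl(\sum_{x_i\in S}s_\theta(x_i)\bigr)^\top$, and the second inner sum is simply $d\,|S|\,J_{p\times c}$; both matrices have $p$ identical rows. After concatenating along the last axis, the argument of $\gamma_\theta$ is a $p\times 2c$ matrix every row of which equals
\[
v(S)\;:=\;\Bigl[\sum_{x_i\in S}s_\theta(x_i)\,;\;d\,|S|\,\mathbf{1}_c\Bigr]\in\mathbb{R}^{2c}.
\]
Since $h_\theta:\mathbb{R}^m\to\mathbb{R}^{d\times c}$ is arbitrary continuous, $s_\theta$ can realise any continuous $\mathbb{R}^m\to\mathbb{R}^c$ (for instance by putting $s_\theta$ in the first row of $h_\theta$ and zero in the remaining rows).

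Next I invoke the DeepSets universal approximation theorem. On the fixed-cardinality domain $\chi$, every continuous (in Hausdorff distance) set function $f:\chi\to\mathbb{R}^p$ admits, for $c$ large enough, continuous maps $\psi:\mathbb{R}^m\to\mathbb{R}^c$ and $\rho:\mathbb{R}^{2c}\to\mathbb{R}^p$ such that $\sup_{S\in\chi}\lvert f(S)-\rho(v(S))\rvert<\epsilon$; the $d\,|S|$ block is a constant on $\chi$ and is simply absorbed by $\rho$. I then set $s_\theta:=\psi$ and take $\gamma_\theta$ to be the continuous map $\mathbb{R}^{p\times 2c}\to\mathbb{R}^p$ that reads off any one designated row of its input and applies $\rho$ to it. Since each row of our concatenated matrix equals $v(S)$, this gives $\lvert f(S)-\gamma_\theta(\cdots)\rvert=\lvert f(S)-\rho(v(S))\rvert<\epsilon$, as required.

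The main obstacle is the DeepSets step rather than the algebra. The cleanest versions of the theorem are stated for scalar set functions on fixed-size subsets of $\mathbb{R}^m$; to match the present vector-valued target $\mathbb{R}^p$ one either applies the scalar theorem coordinate-wise and then merges the $p$ embedding maps $\psi_i$ into a single $\psi$ of dimension at most $\sum_i c_i$ (absorbed into ``$c$ sufficiently large''), or appeals directly to a multivariate generalisation. The observation that makes this work is that $\phi(\eta(x_i))=J_{p\times d}$ destroys all spatial information and forces the aggregation to be a pure sum, which is exactly the setting in which DeepSets is tight; after that reduction, everything reduces to standard facts about continuous functions and matrix broadcasting.
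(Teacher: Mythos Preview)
Your proposal is correct and follows essentially the same route as the paper: collapse the $\phi(\eta(x_i))=J_{p\times d}$ case to a pure sum aggregation and then invoke the DeepSets decomposition theorem of Zaheer et al.\ to supply $\psi$ and $\rho$. Your matrix bookkeeping (the row-constant collapse to $v(S)$, the explicit $p\times 2c$ concatenation, and the row-read-off definition of $\gamma_\theta$) is in fact more careful than the paper's own reduction, which elides these dimension details.
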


\begin{proof}
	In the case of $\phi(\eta(x_i))=J_{p\times d}$, the theorem reduces to
	
	\begin{equation*}
		\left|f(S)-\gamma_\theta\left(\left[\sum_{x_i\in S}\{ h_\theta(x_i)\}, J_{p\times1}\right]\right)\right| < \epsilon
	\end{equation*}
	
	which can be equivalently expressed as

	\begin{equation*}
		\left|f(S)-\gamma_\theta\left(\sum_{x_i\in S}\{ h_\theta(x_i)\}\right)\right| < \epsilon
	\end{equation*}
	
	given that $\gamma_\theta$ is a universal function approximator.
	
	By invoking Theorem 7 in \cite{NIPS2017_f22e4747}, any continuous permutation invariant set function can be decomposed into the form 
	\begin{equation*}
		f(S)=\rho(\sum_{x_i\in S}\{\psi(x_i)\})
	\end{equation*}
	
	where $\rho$ and $\psi$ are continuous functions. By the universal approximation theorem for CNNs and MLPs, there exists some $\theta$ such that
	
	\begin{equation*}
		\rho(\sum_{x_i\in S}\psi(x_i))=\gamma_\theta\left(\sum_{x_i\in S}\{ h_\theta(x_i)\}\right)
	\end{equation*}
	
	Therefore,
	
	\begin{equation*}
		\left|f(S)-\gamma_\theta\left(\sum_{x_i\in S}\{ h_\theta(x_i)\}\right)\right| = \left| f(S) - f(S) \right| < \epsilon
	\end{equation*}
	
\end{proof}

\newpage

\section{Reconstruction Results}
\label{app:images}

\begin{figure}
	\centering
	\begin{tabular}{ccccccccccc}
		$\sf{Ground\;Truth}$ & $\sf{1024}$ & $\sf{2048}$ & $\sf{4096}$ & $\sf{8192}$ & $\sf{16384}$ & $\sf{32768}$\\
		\vspace{2mm}
		\includegraphics[scale=0.9]{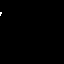} & \includegraphics[scale=0.9]{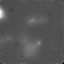} & \includegraphics[scale=0.9]{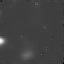} & \includegraphics[scale=0.9]{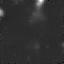} & \includegraphics[scale=0.9]{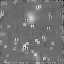} & \includegraphics[scale=0.9]{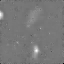} & \includegraphics[scale=0.9]{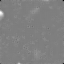} &\\
		\vspace{2mm}
		\includegraphics[scale=0.9]{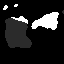} & \includegraphics[scale=0.9]{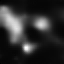} & \includegraphics[scale=0.9]{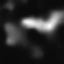} & \includegraphics[scale=0.9]{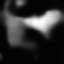} & \includegraphics[scale=0.9]{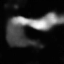} & \includegraphics[scale=0.9]{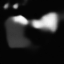} & \includegraphics[scale=0.9]{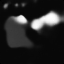} &\\
		\vspace{2mm}
		\includegraphics[scale=0.9]{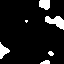} & \includegraphics[scale=0.9]{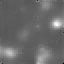} & \includegraphics[scale=0.9]{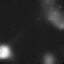} & \includegraphics[scale=0.9]{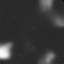} & \includegraphics[scale=0.9]{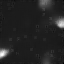} & \includegraphics[scale=0.9]{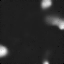} & \includegraphics[scale=0.9]{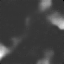} &\\
		\vspace{2mm}
		\includegraphics[scale=0.9]{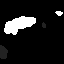} & \includegraphics[scale=0.9]{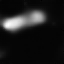} & \includegraphics[scale=0.9]{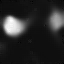} & \includegraphics[scale=0.9]{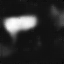} & \includegraphics[scale=0.9]{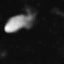} & \includegraphics[scale=0.9]{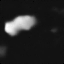} & \includegraphics[scale=0.9]{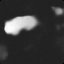} &\\
		\vspace{2mm}
		\includegraphics[scale=0.9]{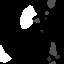} & \includegraphics[scale=0.9]{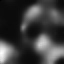} & \includegraphics[scale=0.9]{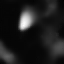} & \includegraphics[scale=0.9]{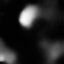} & \includegraphics[scale=0.9]{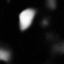} & \includegraphics[scale=0.9]{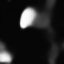} & \includegraphics[scale=0.9]{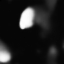} &\\
		\vspace{2mm}
		\includegraphics[scale=0.9]{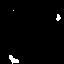} & \includegraphics[scale=0.9]{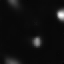} & \includegraphics[scale=0.9]{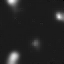} & \includegraphics[scale=0.9]{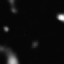} & \includegraphics[scale=0.9]{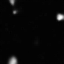} & \includegraphics[scale=0.9]{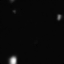} & \includegraphics[scale=0.9]{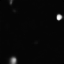} &\\
		\vspace{2mm}
		\includegraphics[scale=0.9]{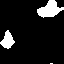} & \includegraphics[scale=0.9]{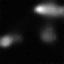} & \includegraphics[scale=0.9]{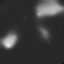} & \includegraphics[scale=0.9]{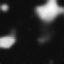} & \includegraphics[scale=0.9]{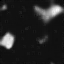} & \includegraphics[scale=0.9]{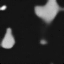} & \includegraphics[scale=0.9]{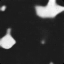} &\\
		\vspace{2mm}
		\includegraphics[scale=0.9]{images/large/1024/crosssection_7.png} & \includegraphics[scale=0.9]{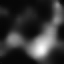} & \includegraphics[scale=0.9]{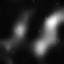} & \includegraphics[scale=0.9]{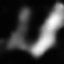} & \includegraphics[scale=0.9]{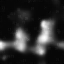} & \includegraphics[scale=0.9]{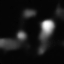} & \includegraphics[scale=0.9]{images/large/32768/crosssection_7_pred.png} &\\
	\end{tabular}
	\caption{\textbf{2D Cross-sections.} 2D cross-sections of the 3D reconstructions produced by $\mu$-Net-L at various dosages. The improvement in reconstruction quality as the dosage increases can be seen clearly. We also see that some cross-sections appear to have worse cross-sections. This is because the materials being reconstructed have a high radiation length, so the muons do not scatter very much.}
	\label{fig:2d-crosssection}
\end{figure}

\begin{figure}
	\centering
	\begin{tabular}{ccccccccccc}
		$\sf{Ground\;Truth}$ & $\sf{1024}$ & $\sf{2048}$ & $\sf{4096}$ \\
		\includegraphics[trim={6.1cm 2cm 7cm 4.5cm},scale=0.35,clip]{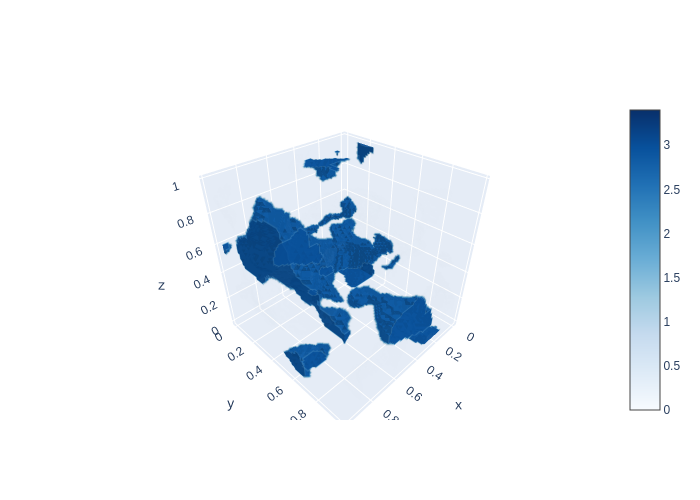} & \includegraphics[trim={6.1cm 2cm 7cm 4.5cm},scale=0.35,clip]{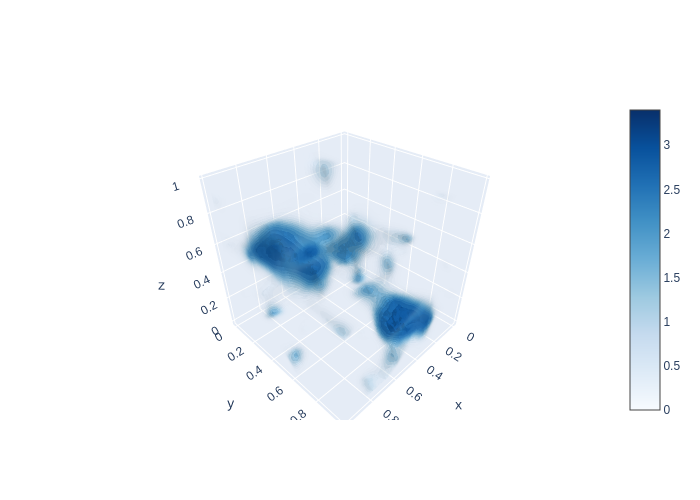} & \includegraphics[trim={6.1cm 2cm 7cm 4.5cm},scale=0.35,clip]{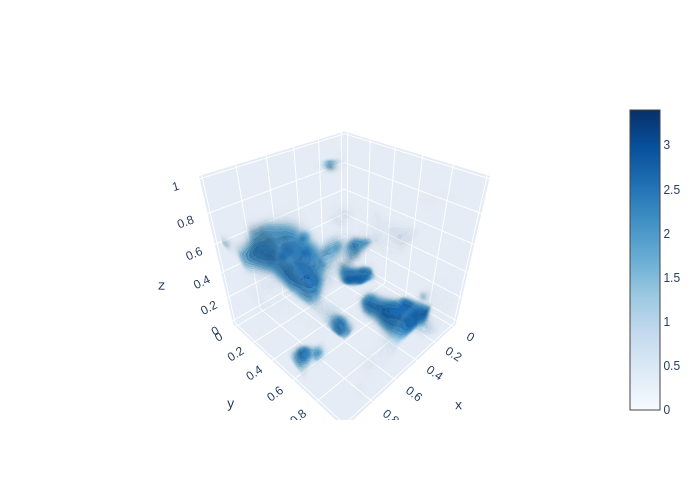} & \includegraphics[trim={6.1cm 2cm 7cm 4.5cm},scale=0.35,clip]{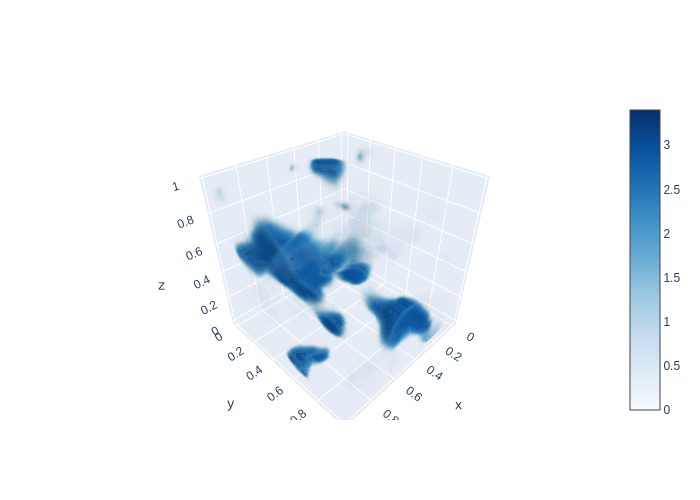} &\\
		& $\sf{8192}$ & $\sf{16384}$ & $\sf{32768}$\\
		& \includegraphics[trim={6.1cm 2cm 7cm 4.5cm},scale=0.35,clip]{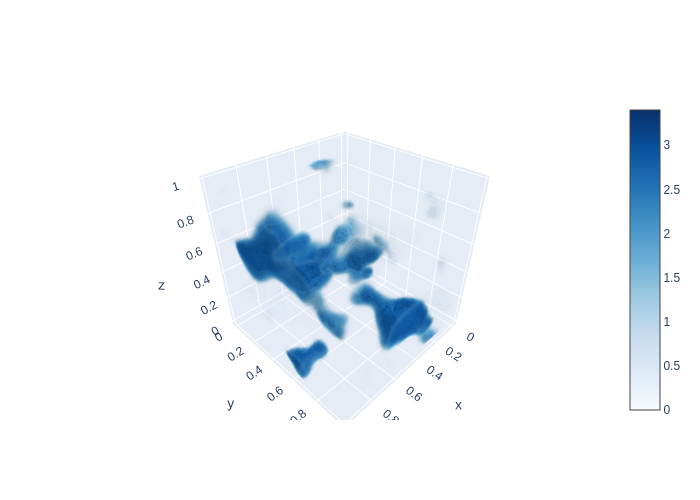} & \includegraphics[trim={6.1cm 2cm 7cm 4.5cm},scale=0.35,clip]{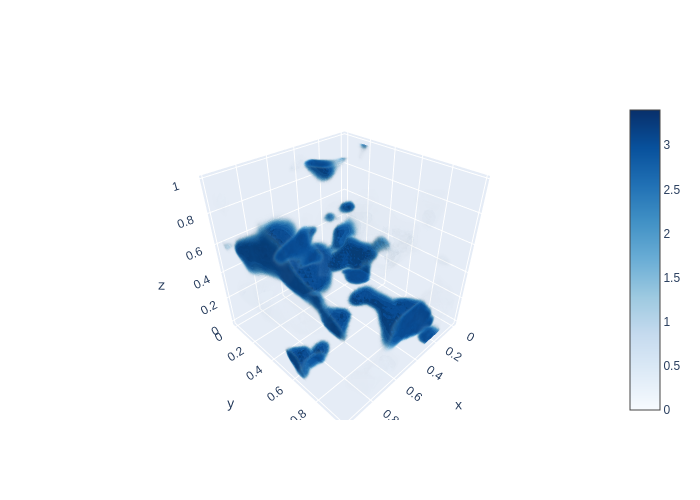} & \includegraphics[trim={6.1cm 2cm 7cm 4.5cm},scale=0.35,clip]{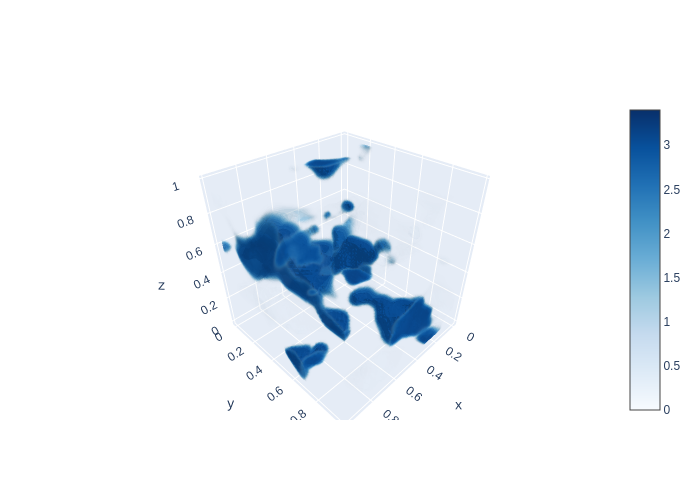} & \\
		\midrule\\
		$\sf{Ground\;Truth}$ & $\sf{1024}$ & $\sf{2048}$ & $\sf{4096}$\\
		\includegraphics[trim={6.1cm 2cm 7cm 4.5cm},scale=0.35,clip]{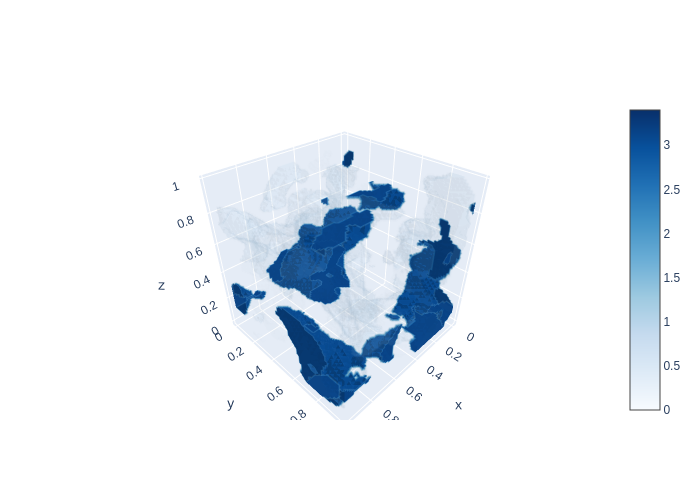} & \includegraphics[trim={6.1cm 2cm 7cm 4.5cm},scale=0.35,clip]{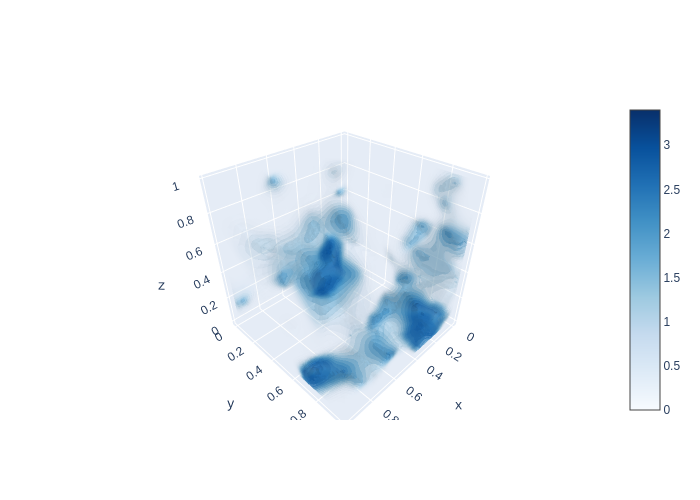} & \includegraphics[trim={6.1cm 2cm 7cm 4.5cm},scale=0.35,clip]{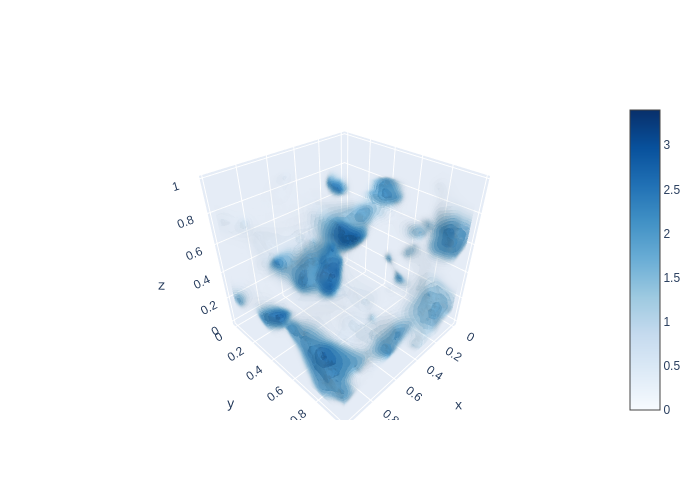} & \includegraphics[trim={6.1cm 2cm 7cm 4.5cm},scale=0.35,clip]{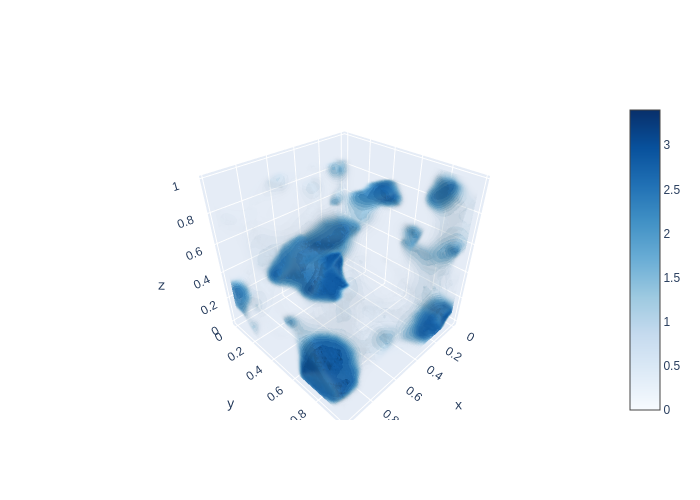} &\\
		& $\sf{8192}$ & $\sf{16384}$ & $\sf{32768}$\\
		& \includegraphics[trim={6.1cm 2cm 7cm 4.5cm},scale=0.35,clip]{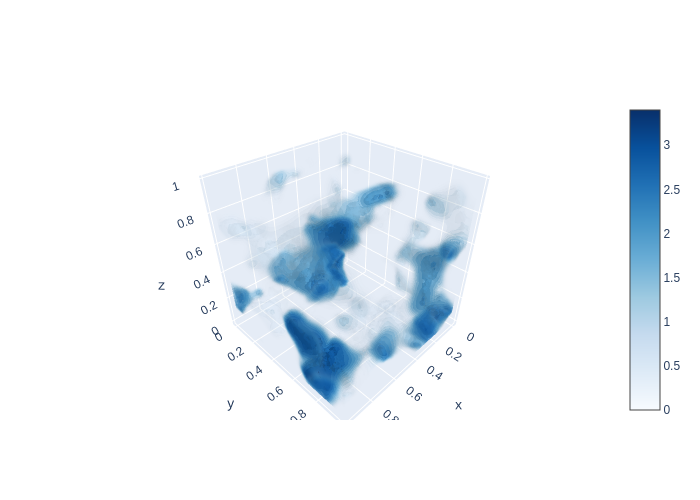} & \includegraphics[trim={6.1cm 2cm 7cm 4.5cm},scale=0.35,clip]{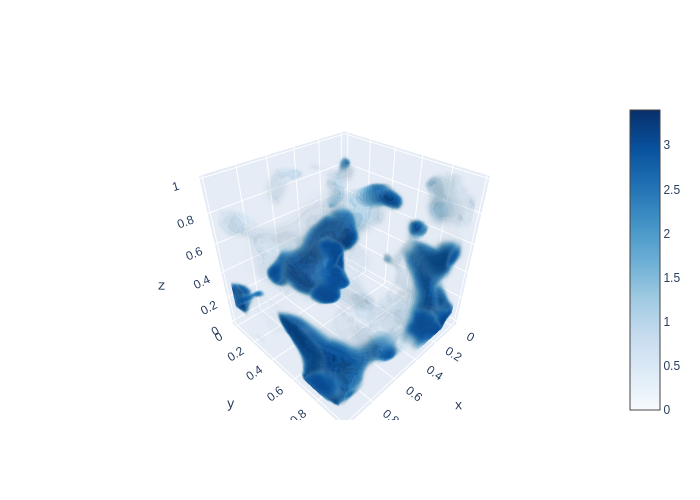} & \includegraphics[trim={6.1cm 2cm 7cm 4.5cm},scale=0.35,clip]{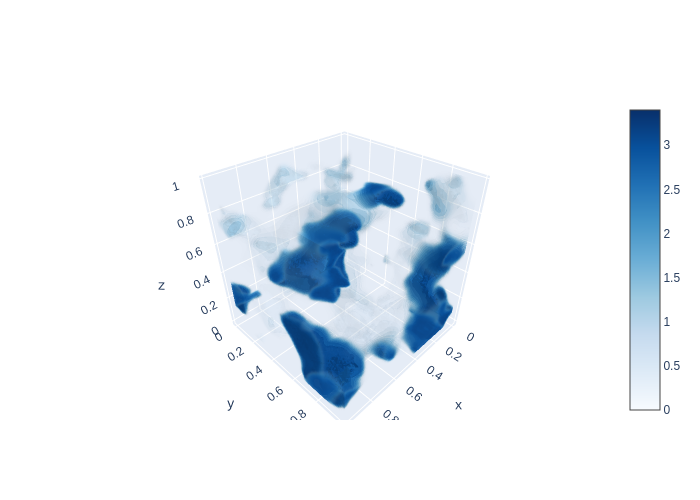} & \\
	\end{tabular}
	\caption{\textbf{3D Reconstructions.} More 3D reconstructions produced by $\mu$-Net-L at various dosages. The improvement in reconstruction quality as the dosage increases can be seen clearly}
	\label{fig:3d-reconstruction}
\end{figure}

\begin{figure}
	\centering
	\begin{tabular}{ccccccccccc}
		$\sf{Ground\;Truth}$ & $\sf{1024}$ & $\sf{2048}$ & $\sf{4096}$\\
		\includegraphics[trim={6.1cm 2cm 7cm 4.5cm},scale=0.35,clip]{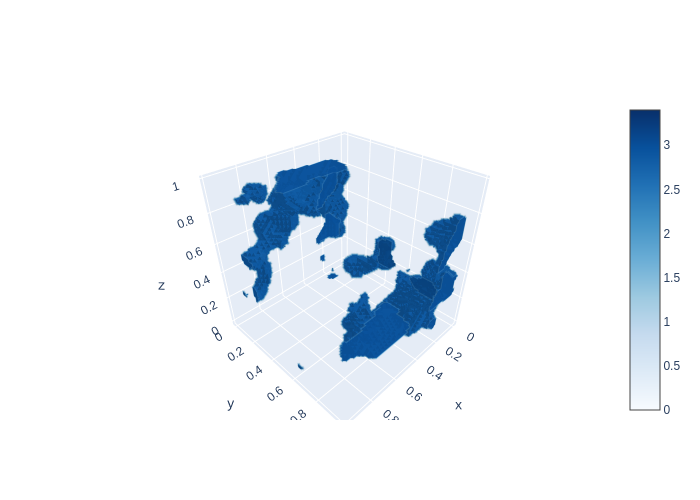} & \includegraphics[trim={6.1cm 2cm 7cm 4.5cm},scale=0.35,clip]{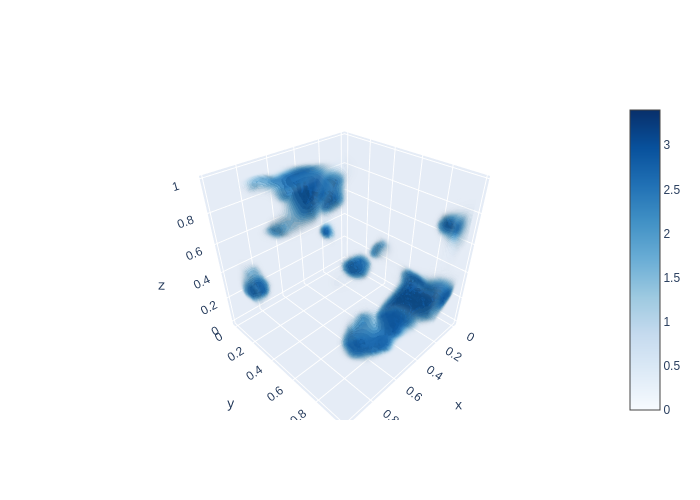} & \includegraphics[trim={6.1cm 2cm 7cm 4.5cm},scale=0.35,clip]{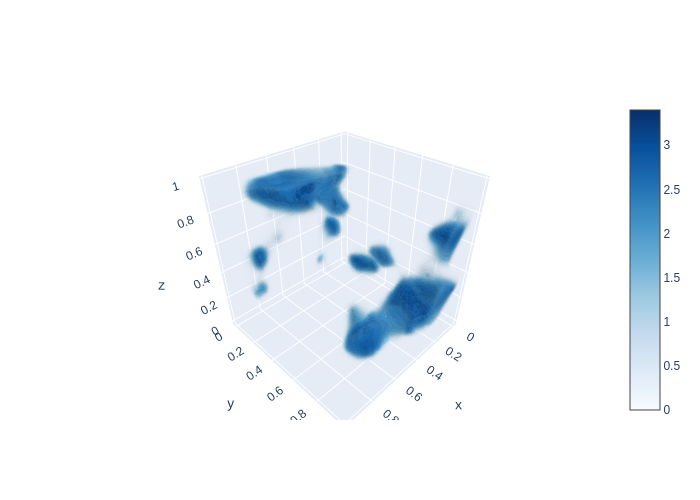} & \includegraphics[trim={6.1cm 2cm 7cm 4.5cm},scale=0.35,clip]{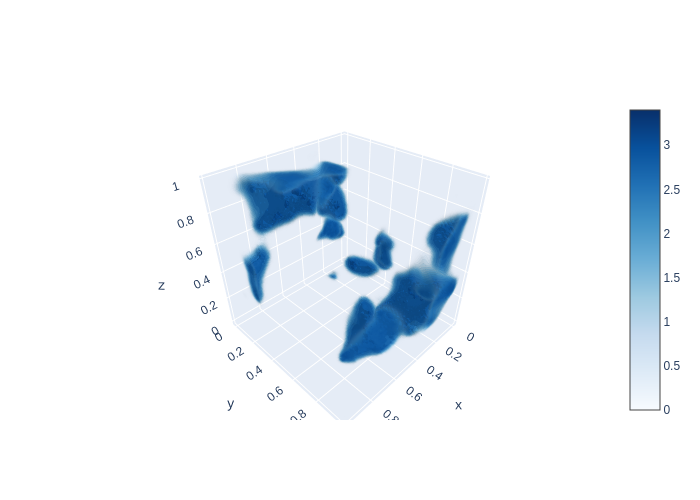} &\\
		& $\sf{8192}$ & $\sf{16384}$ & $\sf{32768}$\\
		& \includegraphics[trim={6.1cm 2cm 7cm 4.5cm},scale=0.35,clip]{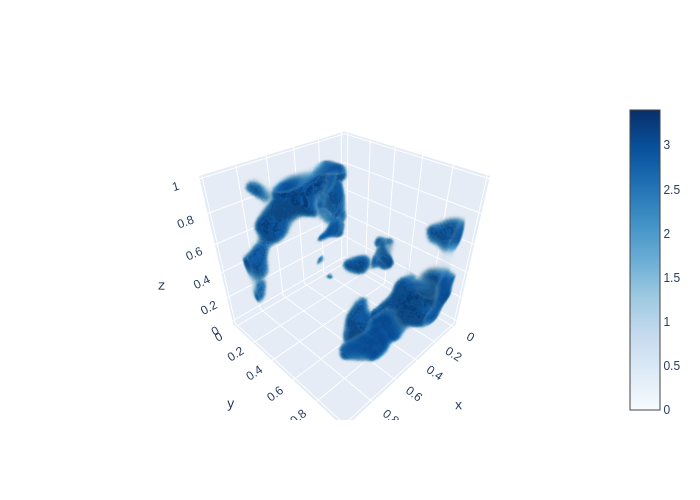} & \includegraphics[trim={6.1cm 2cm 7cm 4.5cm},scale=0.35,clip]{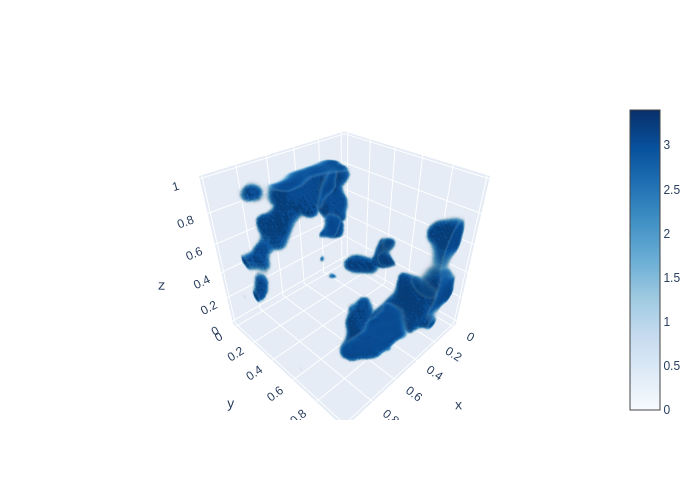} & \includegraphics[trim={6.1cm 2cm 7cm 4.5cm},scale=0.35,clip]{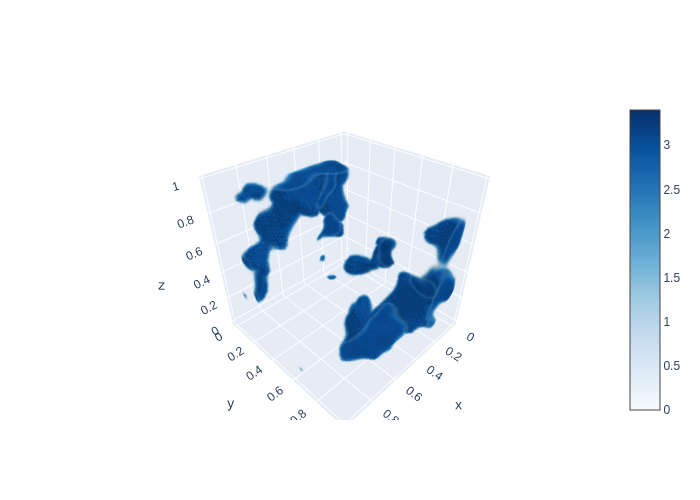} & \\
		\midrule\\
		$\sf{Ground\;Truth}$ & $\sf{1024}$ & $\sf{2048}$ & $\sf{4096}$\\
		\includegraphics[trim={6.1cm 2cm 7cm 4.5cm},scale=0.35,clip]{images/large/1024/3d_graph_7.png} & \includegraphics[trim={6.1cm 2cm 7cm 4.5cm},scale=0.35,clip]{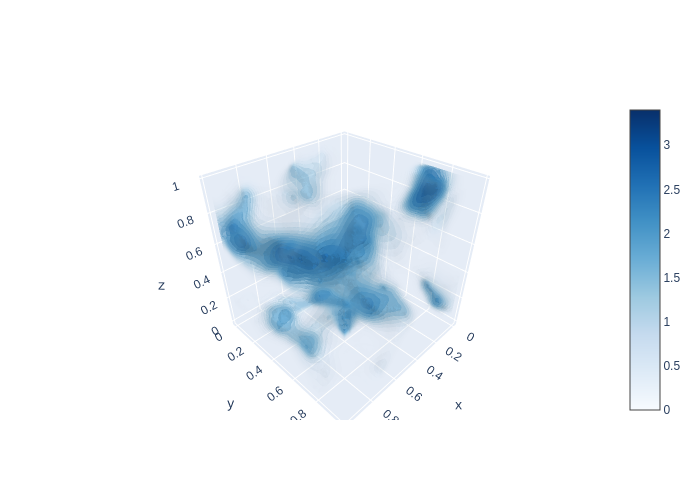} & \includegraphics[trim={6.1cm 2cm 7cm 4.5cm},scale=0.35,clip]{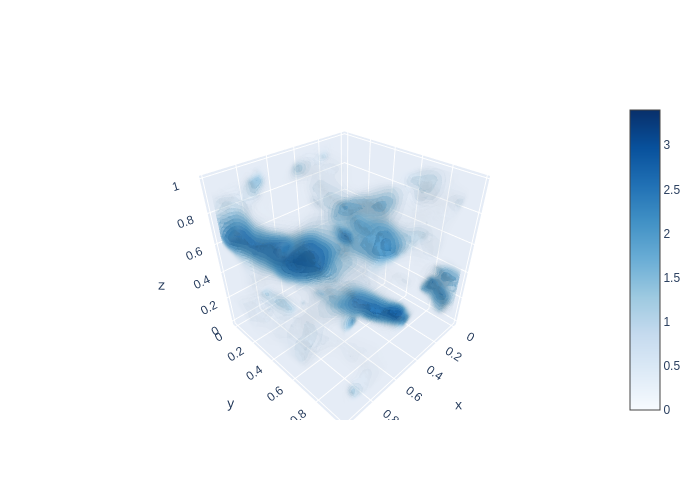} & \includegraphics[trim={6.1cm 2cm 7cm 4.5cm},scale=0.35,clip]{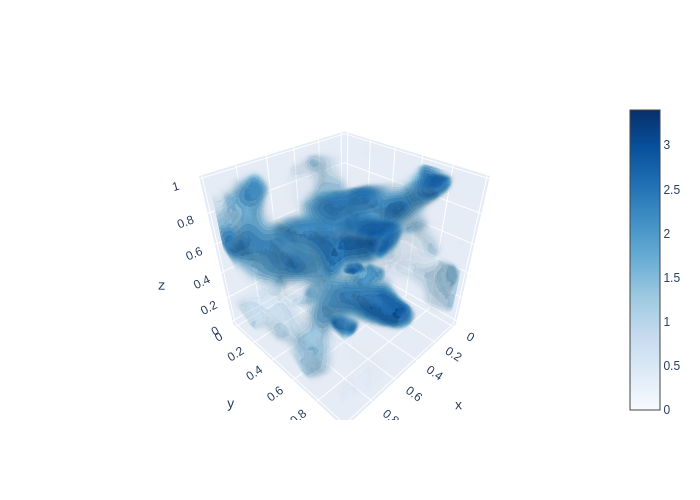} &\\
		& $\sf{8192}$ & $\sf{16384}$ & $\sf{32768}$\\
		& \includegraphics[trim={6.1cm 2cm 7cm 4.5cm},scale=0.35,clip]{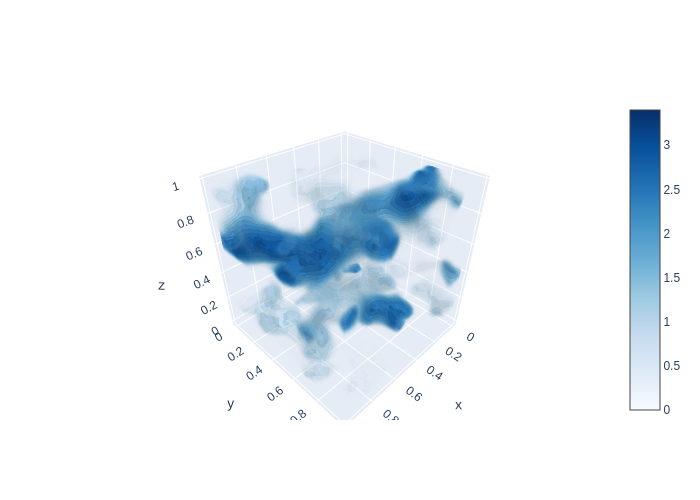} & \includegraphics[trim={6.1cm 2cm 7cm 4.5cm},scale=0.35,clip]{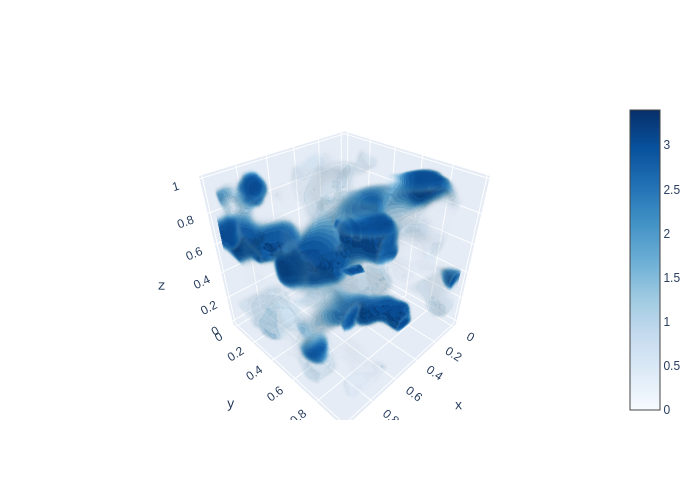} & \includegraphics[trim={6.1cm 2cm 7cm 4.5cm},scale=0.35,clip]{images/large/32768/3d_graph_7_pred.png} & \\
	\end{tabular}
	\caption{\textbf{3D Reconstructions (continued).} Even more 3D reconstructions produced by $\mu$-Net-L at various dosages. The improvement in reconstruction quality as the dosage increases can be seen clearly}
	\label{fig:3d-reconstruction-continued}
\end{figure}

\newpage

\section{Artifact Analysis}
\label{app:artifacts}

\paragraph{Little Squares.} We can see these little square in Figure \ref{fig:2d-crosssection} (first row). These are caused by the muons that do not scatter and are placed randomly along their trajectories. Since the model typically will see points placed within the voxels corresponding to there being actual material there, there is a slightly larger scattering density predicted at these regions where there should be nothing. These artifacts are only visible within the cross-section when there is nothing else inside (as is the case for the first row of Figure \ref{fig:2d-crosssection}).

\begin{figure}[h]
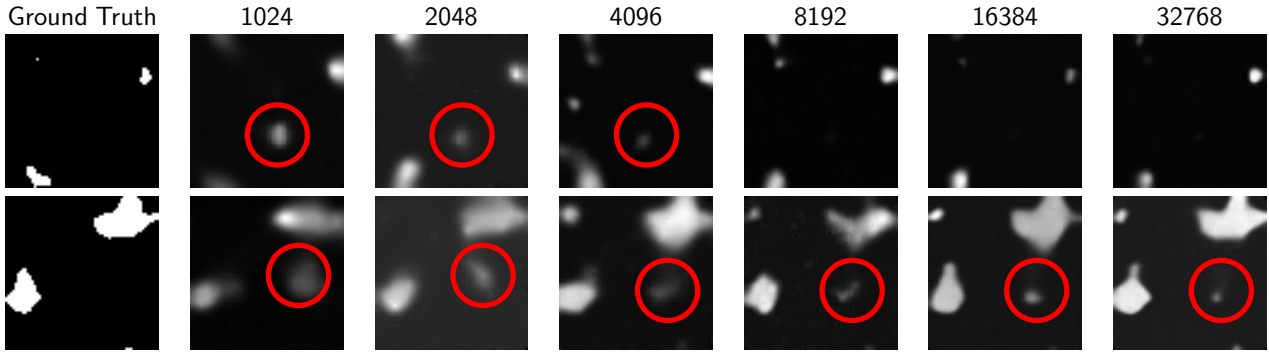

	\centering
	\begin{tabular}{ccccccccccc}
		$\sf{Ground\;Truth}$ & $\sf{1024}$ & $\sf{2048}$ & $\sf{4096}$ & $\sf{8192}$ & $\sf{16384}$ & $\sf{32768}$\\
		\includegraphics[scale=0.9]{images/large/1024/crosssection_5.png} & \stackinset{l}{7.2mm}{b}{2.6mm}{\textcolor{red}{\solidcirc[0]{2pt}{0.4}{0.4}}}{\includegraphics[scale=0.9]{images/large/1024/crosssection_5_pred.png}} & \stackinset{l}{7.2mm}{b}{2.6mm}{\textcolor{red}{\solidcirc[0]{2pt}{0.4}{0.4}}}{\includegraphics[scale=0.9]{images/large/2048/crosssection_5_pred.png}} & \stackinset{l}{7.2mm}{b}{2.6mm}{\textcolor{red}{\solidcirc[0]{2pt}{0.4}{0.4}}}{\includegraphics[scale=0.9]{images/large/4096/crosssection_5_pred.png}} & \includegraphics[scale=0.9]{images/large/8192/crosssection_5_pred.png} & \includegraphics[scale=0.9]{images/large/16384/crosssection_5_pred.png} & \includegraphics[scale=0.9]{images/large/32768/crosssection_5_pred.png} &\\
		\vspace{2mm}
		\includegraphics[scale=0.9]{images/large/1024/crosssection_6.png} & \stackinset{l}{10mm}{b}{5.5mm}{\textcolor{red}{\solidcirc[0]{2pt}{0.4}{0.4}}}{\includegraphics[scale=0.9]{images/large/1024/crosssection_6_pred.png}} & \stackinset{l}{10mm}{b}{5.5mm}{\textcolor{red}{\solidcirc[0]{2pt}{0.4}{0.4}}}{\includegraphics[scale=0.9]{images/large/2048/crosssection_6_pred.png}} & \stackinset{l}{10mm}{b}{3.5mm}{\textcolor{red}{\solidcirc[0]{2pt}{0.4}{0.4}}}{\includegraphics[scale=0.9]{images/large/4096/crosssection_6_pred.png}} & \stackinset{l}{10mm}{b}{3.5mm}{\textcolor{red}{\solidcirc[0]{2pt}{0.4}{0.4}}}{\includegraphics[scale=0.9]{images/large/8192/crosssection_6_pred.png}} & \stackinset{l}{10mm}{b}{3.5mm}{\textcolor{red}{\solidcirc[0]{2pt}{0.4}{0.4}}}{\includegraphics[scale=0.9]{images/large/16384/crosssection_6_pred.png}} & \stackinset{l}{10mm}{b}{3.5mm}{\textcolor{red}{\solidcirc[0]{2pt}{0.4}{0.4}}}{\includegraphics[scale=0.9]{images/large/32768/crosssection_6_pred.png}} &
	\end{tabular}
	\caption{\textbf{Hotspots.} 2D cross-sections of the 3D reconstructions produced by $\mu$-Net-L at various dosages. The false PoCA hotspots are circled in red. We see that as the dosage increases, these hotposts fade in prominence.}
	\label{fig:hotspots}
\end{figure}

\paragraph{False Hotspots.} We can see false hotspots in Figure \ref{fig:2d-crosssection} (2nd and 3rd law row) and highlighted in Figure \ref{fig:hotspots}. These are the result of muons scattering twice when passing through materials. PoCA assumes that muons scatter only once. This means that if a muon scatters twice, its PoCA point will end up somewhere end the midpoint of its actual scattering points. We also notice that as the dosage increases, these hotspots tend to fade in prominence. This is likely because with a larger dosages, the model is better able to distinguish between real scattering points and these false hotspots.

\begin{figure}[h]
	\centering
	\begin{tabular}{c|cccccccccc}
		$\sf{Ground\;Truth}$ & $\sf{Reconstructions}$\\
		\vspace{2mm}
		\includegraphics{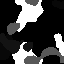} &
		\includegraphics{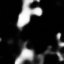} &
		\includegraphics{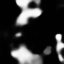} &
		\includegraphics{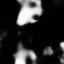} &
		\includegraphics{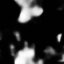} &
		\includegraphics{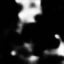} &\\
	\includegraphics[trim={6.1cm 2cm 7.2cm 4.5cm},scale=0.2,clip]{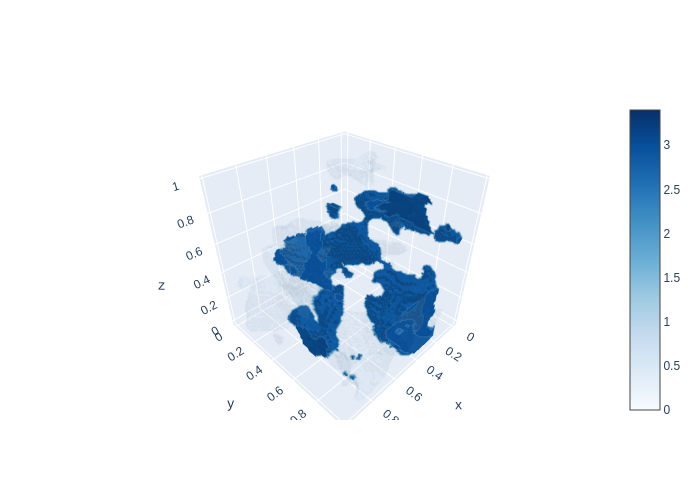} & \includegraphics[trim={6.1cm 2cm 7.2cm 4.5cm},scale=0.2,clip]{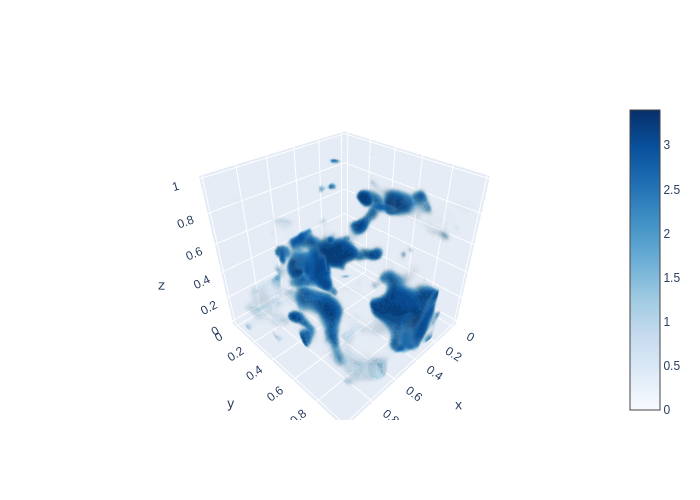} & \includegraphics[trim={6.1cm 2cm 7.2cm 4.5cm},scale=0.2,clip]{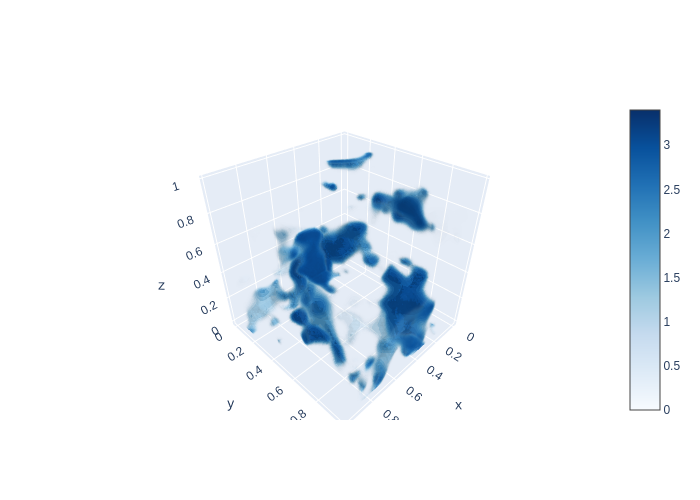} & \includegraphics[trim={6.1cm 2cm 7.2cm 4.5cm},scale=0.2,clip]{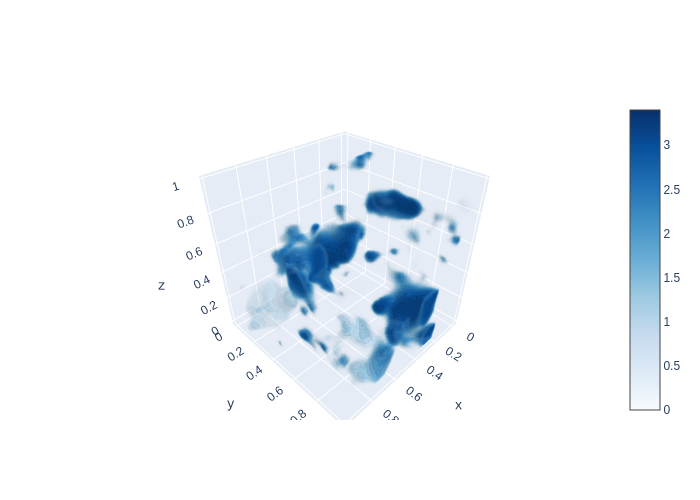} &  \includegraphics[trim={6.1cm 2cm 7.2cm 4.5cm},scale=0.2,clip]{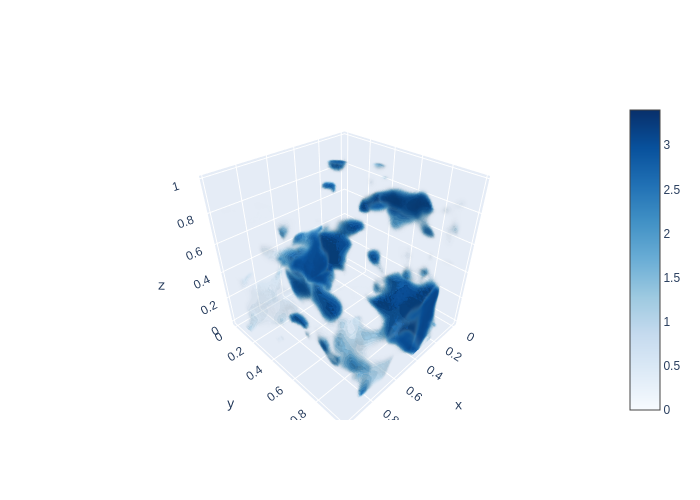} & \includegraphics[trim={6.1cm 2cm 7.2cm 4.5cm},scale=0.2,clip]{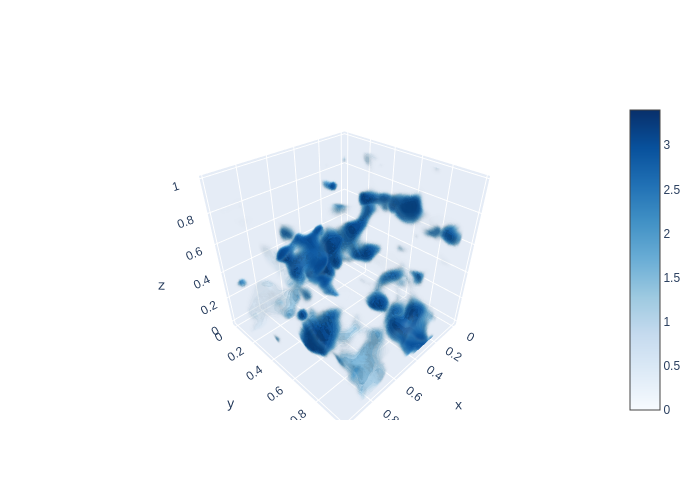} & 
	\end{tabular}
	\caption{\textbf{Distortions.} Several sample reconstructions of the same target object with a dosage of 8192. Each reconstruction uses a different sample of 8192 muons from the distribution of muon detections.}
	\label{fig:distortion}
\end{figure}

\paragraph{Distortions and Blurring.} In Figure \ref{fig:distortion}, many of the objects in the cross-section are visibly distorted, with these distortions shifting as the dosage increases (particularly noticeable in last row and 2nd row). This is due to the fundamentally random nature of this type of tomography and the low dosages of muons. Some scattering points will inevitably fall outside of what is the actual material. The model will then assume that these are part of the actual material, leading to blurring and distortions due to the fundamentally random nature of where these points will be found.

\newpage

\section{Raw Results}

\begin{table}[h]
	\caption{\textbf{Model Scaling.} The results of the model at different dosages for various sizes. The inference times are evaluated on 2 T4 GPUs with a batch size of 8. The best results are bolded. For PoCA, the inference times are evaluated on a P100 GPU.}
	\label{tab:model-scaling}
	\centering
	\vskip 0.15in
	\begin{tabular}{llllll}
		\toprule
		\textbf{Model} & \textbf{Dosage} & \textbf{Time}$\downarrow$ &  \textbf{MSE}$\downarrow$ & \textbf{MAE}$\downarrow$ & \textbf{PSNR}$\uparrow$ \\
		\midrule
		$\mu$-Net-T & 1024 & \textbf{126 ms} & \textbf{0.2276} & 0.2204 & \textbf{17.1426} \\
		$\mu$-Net-B & 1024 & 200 ms & 0.2318 & 0.2313 & 17.0255 \\
		$\mu$-Net-L & 1024 & 288 ms & 0.2295 & \textbf{0.2178} & 17.0646 \\
		PoCA & 1024 & 22.5s & 0.4595 & 0.2447 & 13.6627 \\
		\midrule
		$\mu$-Net-T & 2048 & \textbf{135 ms} & 0.1965 & 0.1989 & 17.7786 \\
		$\mu$-Net-B & 2048 & 208 ms & 0.1936 & \textbf{0.1911} & 17.8486 \\
		$\mu$-Net-L & 2048 & 306 ms & \textbf{0.1929} & 0.1918 & \textbf{17.8633} \\
		PoCA & 2048 & 43.8s & 0.4338 & 0.2465 & 13.9112\\
		\midrule
		$\mu$-Net-T & 4096 & \textbf{141 ms} & 0.1653 & 0.1725 & 18.5347 \\
		$\mu$-Net-B & 4096 & 218 ms & 0.1649 & 0.1738 & \textbf{18.5504} \\
		$\mu$-Net-L & 4096 & 301 ms & \textbf{0.1644} & \textbf{0.1697} & 18.5388 \\
		PoCA & 4096 & 79.8s & 0.3950 & 0.2466 & 14.3228 \\
		\midrule
		$\mu$-Net-T & 8192 & \textbf{169 ms} & 0.1388 & 0.1438 & 19.2979 \\
		$\mu$-Net-B & 8192 & 234 ms & 0.1350 & 0.1457 & 19.3958 \\
		$\mu$-Net-L & 8192 & 325 ms & \textbf{0.1348} & \textbf{0.1389} & \textbf{19.4232} \\
		PoCA & 8192 & 164s & 0.3660 & 0.2420 & 15.0769 \\
		\midrule
		$\mu$-Net-T & 16384 & \textbf{246 ms} & 0.1169 & 0.1207 & 20.0433 \\
		$\mu$-Net-B & 16384 & 293 ms & 0.1118 & 0.1238 & 20.2685 \\
		$\mu$-Net-L & 16384 & 384 ms & \textbf{0.1062} & \textbf{0.1180} & \textbf{20.4322} \\
		PoCA & 16384 & 310s & 0.3285 & 0.2315 & 15.5586 \\
		\midrule
		$\mu$-Net-T & 32768 & \textbf{347 ms} & 0.0993 & 0.1156 & 20.7906 \\
		$\mu$-Net-B & 32768 & 434 ms & 0.0919 & 0.1040 & 21.1595 \\
		$\mu$-Net-L & 32768 & 538 ms & \textbf{0.0875} & \textbf{0.0983} & \textbf{21.3530} \\
		PoCA & 32768 & 612s & 0.3092 & 0.2258 & 17.1091 \\
		\bottomrule
	\end{tabular}
\end{table}

\begin{table}
	\caption{\textbf{Detector Resolutions.} Results of various methods for resolutions of the detector. $\mu$-Net-T$^*$ indicates that the model was finetuned on the new data for 10 epochs. The best results are bolded.}
	\label{tab:detector-resolution}
	\vskip 0.15in
	\centering
	\begin{tabular}{llllll}
		\toprule
		\textbf{Model} & \textbf{Detector Resolution} & \textbf{MSE}$\downarrow$ & \textbf{MAE}$\downarrow$ & \textbf{PSNR}$\uparrow$ \\
		\midrule
		$\mu$-Net-T & $64\times64$ & 0.2545 & 0.2647 & 16.5660 \\
		$\mu$-Ne-Tt$^*$ & $64\times64$ & \textbf{0.2317} & \textbf{0.2107} & \textbf{17.0075} \\
		PoCA & $64\times64$ & 0.5210 & 0.2690 & 13.1784 \\
		\midrule
		$\mu$-Net-T & $128\times128$ & 0.2259 & 0.2383 & 17.1177 \\
		$\mu$-Net-T$^*$ & $128\times128$ & \textbf{0.2174} & \textbf{0.2349} & \textbf{17.3046} \\
		PoCA & $128\times128$ & 0.5226 & 0.2696 & 13.1625 \\
		\midrule
		$\mu$-Net-T & $256\times256$ & 0.2123 & \textbf{0.2221} & 17.4097 \\
		$\mu$-Net-T$^*$ & $256\times256$ & \textbf{0.2068} & 0.2226 & \textbf{17.5387} \\
		PoCA & $256\times256$ & 0.5210 & 0.2690 & 13.1784 \\
		\midrule
		$\mu$-Net-T & $1024\times1024$ & \textbf{0.1991} & \textbf{0.2057} & \textbf{17.7123} \\
		$\mu$-Net-T$^*$ & $1024\times1024$ & 0.2035 & 0.2119 & 17.6165 \\
		PoCA & $1024\times1024$ & 0.5210 & 0.2690 & 13.1784 \\
		\midrule
		$\mu$-Net-T & $2048\times2048$ & \textbf{0.1970} & 0.2023 & \textbf{17.7616} \\
		$\mu$-Net-T$^*$ & $2048\times2048$ & 0.2013 & \textbf{0.1901} & 17.6695 \\
		PoCA & $2048\times2048$ & 0.5210 & 0.2690 & 13.1784 \\
		\midrule
		$\mu$-Net-T & $\infty$ & \textbf{0.1936} & \textbf{0.1911} & \textbf{17.8486}\\
		PoCA & $\infty$ &  0.4338 & 0.2465 & 13.9112 \\
		\bottomrule
	\end{tabular}
\end{table}

\begin{table}
	\caption{\textbf{Momentum Error.} Results of various models for different levels of error in the momentum estimate. $\mu$-Net$^*$ indicates that the model was finetuned on the new data for 10 epochs. The best results are bolded.}
	\label{tab:momentum-estimate}
	\vskip 0.15in
	\centering
	\begin{tabular}{llllll}
		\toprule
		\textbf{Model} & \textbf{$\Delta\mathbf{p}$} & \textbf{MSE}$\downarrow$ & \textbf{MAE}$\downarrow$ & \textbf{PSNR}$\uparrow$ \\
		\midrule
		$\mu$-Net & 0\% & 0.1951 & 0.1977 & 17.8110 \\
		$\mu$-Net$^*$ & 0\% & \textbf{0.1920} & \textbf{0.1938} & \textbf{17.8865} \\
		PoCA & 0\% & 0.4224 & 0.2442 & 14.0280 \\
		\midrule
		$\mu$-Net & 20\% & \textbf{0.1965} & \textbf{0.1989} & \textbf{17.7786} \\
		PoCA & 20\% & 0.4338 & 0.2465 & 13.9112 \\
		\midrule
		$\mu$-Net & 40\% & 0.2004 & 0.2033 & 17.6844 \\
		$\mu$-Net$^*$ & 40\% & \textbf{0.1987} & \textbf{0.1940} & \textbf{17.7258} \\
		PoCA & 40\% & 0.4577 & 0.2515 & 13.6717 \\
		\midrule
		$\mu$-Net & 60\% & 0.2230 & 0.2207 & 17.2291 \\
		$\mu$-Net$^*$ & 60\% & \textbf{0.1989} & \textbf{0.1920} & \textbf{17.7326} \\
		PoCA & 60\% & 0.5316 & 0.2616 & 13.1310 \\
		\midrule
		$\mu$-Net & 80\% & 0.2761 & 0.2558 & 16.2882 \\
		$\mu$-Net$^*$ & 80\% & \textbf{0.2020} & \textbf{0.2023} & \textbf{17.6497} \\
		PoCA & 80\% & 0.6228 & 0.2730 & 12.5438 \\
		\midrule
		$\mu$-Net & 100\% & 0.3293 & 0.2866 & 15.6005 \\
		$\mu$-Net$^*$ & 100\% & \textbf{0.2000} & \textbf{0.2014} & \textbf{17.7015} \\
		PoCA & 100\% & 0.8279 & 0.2933 & 11.5817 \\
		\bottomrule
	\end{tabular}
\end{table}

\end{document}